\newtheorem{assumption}{\textsc{Assumption}}
\newcommand{\cK}{\mathcal{K}}
\newcommand{\bP}{\mathbb{P}}
\newcommand{\cN}{\mathcal{N}}
\newcommand{\cU}{\mathcal{U}}
\newcommand{\cO}{\mathcal{O}}
\newcommand{\tDelta}{\tilde{\Delta}}
\newcommand{\cC}{\mathcal{C}}
\newcommand{\bE}{\mathbb{E}}
\begin{document}

\title{Reward Maximization Under Uncertainty: Leveraging Side-Observations on Networks}

\author{\name Swapna Buccapatnam \email sb646f@att.com \\
       \addr AT$\&$T Labs Research, Middletown, NJ 07748, USA
       \AND
       \name Fang Liu \email liu.3977@osu.edu \\
       \name Atilla Eryilmaz \email eryilmaz.2@osu.edu\\
       \addr Department of Electrical and Computer Engineering\\
       The Ohio State University\\
       Columbus, OH 43210, USA
       \AND
       \name Ness B. Shroff \email shroff.11@osu.edu\\
       \addr Department of Electrical and Computer Engineering and Computer Science Engineering\\
       The Ohio State University\\
       Columbus, OH 43210, USA}

\editor{}

\maketitle

\begin{abstract}
We study the stochastic multi-armed bandit (MAB) problem in the presence of side-observations across actions that occur as a result of an underlying network structure. In our model, a bipartite graph captures the relationship between actions and a common set of unknowns such that choosing an action reveals observations for the unknowns that it is connected to. This models a common scenario in online social networks where users respond to their friends' activity, thus providing side information about each other's preferences. Our contributions are as follows: 1) We derive an asymptotic lower bound (with respect to time) as a function of the bi-partite network structure on the regret of any {\it uniformly good policy} that achieves the maximum long-term average reward. 2) We propose two policies - a randomized policy; and a policy based on the well-known upper confidence bound (UCB) policies - both of which explore each action at a rate that is a function of its network position. We show, under mild assumptions, that these policies achieve the asymptotic lower bound on the regret up to a multiplicative factor, independent of the network structure. Finally, we use numerical examples on a real-world social network and a routing example network to demonstrate the benefits obtained by our policies over other existing policies.
\end{abstract}

\begin{keywords}
  Multi-armed Bandits, Side Observations, Bipartite Graph, Regret Bounds
\end{keywords}

\section{Introduction}\label{sec:intro}
Multi-armed bandit (MAB) problems are well-known models of sequential decision-making under uncertainty~\citep{lairobbins} and have lately been used to model new and exciting decision problems in content recommendation systems, online advertising platforms, and
social networks, among others.  In the classical MAB setting, at each time, a bandit policy must choose an action from a set of actions with unknown probability distributions. Choosing an action gives a random reward drawn from the distribution of the action. The regret of any policy is defined as the difference between the total reward obtained from the action with the highest average reward and the given policy's total reward. The goal is to find policies that minimize the expected regret over time.      

In this work, we consider an important extension to the classical MAB problem, where choosing an action not only generates a reward from that action, but also reveals important information for a subset of the remaining actions. We model this relationship between different actions using a bipartite graph between the set of actions and a common set of unknowns (see Figure \ref{fig:examp}). The reward from each action is a known function of a subset of the unknowns (called its parents) and choosing an action reveals observations from each of its parents. Our main objective in this work is to leverage such a structure to improve scalability of bandit policies in terms of the action/decision space.  

Such an information structure between actions becomes available in a variety of applications. For example, consider the problem of \emph{routing} in communication networks, where packets are to be sent over a set of links from source to destination (called a path or a route) in order to minimize the delay. Here, the total delay on each path is the sum of individual link delays, which are unknown. In addition, traveling along a path reveals observations for delays on each of constituent links. Hence, each path provides additional information for all other paths that share some of their links with it. In this example, actions correspond to a set of feasible paths and the set of unknowns corresponds to random delays on all the links in the network. 

Another example occurs in advertising in online social networks through promotional offers. Suppose a user is offered a promotion/discounted price for a product in return for advertising it to his friends/neighbors in an online social network. The influence of the user is then measured by the friends that respond to his message through comments/likes, etc. Each user has an intrinsic unknown probability of responding to such messages on social media. Here, the set of actions correspond to the set of users (to whom promotions are given) and the set of unknowns are the users' intrinsic responsiveness to such promotions.    

In this work, we aim to characterize the asymptotic lower bound on the regret for a general stochastic multi-armed bandit problem in the presence of such an information structure and investigate policies that achieve this lower bound by taking the network structure into account. 
Our main contributions are as follows: 
\begin{itemize} 
\itemsep 0.5mm
\item  We model the MAB problem in the presence of additional structure and derive an asymptotic (with respect to time) lower bound (as a function of the network structure) on the regret of any uniformly good policy which achieves the maximum long term average reward. This lower bound is presented in terms of the optimal value of a linear program (LP).
\item Motivated by the LP lower bound, we propose and investigate the performance of a randomized policy, we call $\epsilon_{t}$-greedy-LP policy, as well as an upper confidence bound based policy, we call UCB-LP policy. Both of these policies {\it explore each action at a rate that is a function of its location in the network.} We show under some mild assumptions that these policies are optimal in the sense that they achieve the asymptotic lower bound on the regret up to a multiplicative constant that is independent of the network structure. 
\end{itemize} 

The model considered in this work is an important first step in the direction of more general models of interdependence across actions. For this model, we show that as the number of actions becomes large, significant benefits can be obtained from policies that explicitly take network structure into account. While $\epsilon_{t}$-greedy-LP policy explores actions at a rate proportional to their network position, its exploration is oblivious to the average rewards of the sub-optimal actions. On the other hand, UCB-LP policy takes into account both the upper confidence bounds on the mean rewards as well as network position of different actions at each time. 

\section{Related Work}\label{sec:relatedSO}
The seminal work of~\cite{lairobbins} showed that the asymptotic lower bound on the regret of any uniformly good policy scales logarithmically with time with a multiplicative constant that is a function of the distributions of actions. Further, \citet{lairobbins} provide constructive policies called Upper Confidence Bound (UCB) policies based on the concept of optimism in the face of uncertainty that asymptotically achieve the lower bound. More recently, \citet{AuerFinite} considered the case of bounded rewards and propose simpler sample-mean-based UCB policies and a decreasing-$\epsilon_{t}$-greedy policy that achieve logarithmic regret uniformly over time, rather than only asymptotically as in the previous works. 

The traditional multi-armed bandit policies incur a regret that is linear in the number of suboptimal arms. This makes them unsuitable in settings such as content recommendation, advertising, etc, where the action space is typically very large. To overcome this difficulty, richer models specifying additional information across reward distributions of different actions have been studied, such as dependent bandits by~\citet{depend}, $\mathcal{X}$-armed bandits by~\citet{xarmed}, linear bandits by~\citet{linear}, contextual side information in bandit problems by~\citet{linUCB}, combinatorial bandits by~\citet{chen2013combinatorial} etc.. 

The works of~\cite{mannor},~\cite{bhagat}, and~\cite{sigmetrics2014} proposed to handle the large number of actions by assuming that choosing an action reveals observations from a larger set of actions. In this setting, actions are embeded in a network and choosing an action provides observations for all the immediate neighbors in the network. The policies proposed in~\cite{mannor} achieve the best possible regret in the adversarial setting~(see \cite{bubeck} for a survey of adversarial MABs) with side-observations, and the regret bounds of these policies are in terms of the independence number of the network. The stochastic version of this problem is introduced in~\cite{bhagat} and~\cite{sigmetrics2014}, which improves upon the results in~\cite{bhagat}. In~\cite{sigmetrics2014}, the authors derive a lower bound on regret in stochastic network setting for any uniformly good policy and propose two policies that achieve this lower bound in these settings up to a multiplicative constant. Our current work extends the setting in~\cite{bhagat,sigmetrics2014} to a more general and important graph feedback structure between the set of actions and a set of common unknowns, which may or may not coincide with the set of actions available to the decision maker. The setting of~\cite{mannor},~\cite{bhagat}, and~\cite{sigmetrics2014} is a special case of this general feedback structure, where the set of unknowns and the set of actions coincide. 

More recently,~\cite{Cohen2016}, have studied the multi-armed bandit problem with a graph based feedback structure similar to~\cite{mannor}, and~\cite{sigmetrics2014}. However, they assume that the graph structure is never fully revealed. In contrast, in many cases such as the problem of routing in communication networks and the problem of influence maximization in social networks, the graph structure is revealed or learnt apriori and is known. When the graph structure is known, the authors in~\cite{sigmetrics2014} propose algorithms for the stochastic setting whose regret performance is bounded by the domination number of the graph. In contrast, the algorithms proposed in~\cite{Cohen2016} assume that the graph is unknown and achieve a regret that is upper bounded by the independence number of the graph. (Note that the independence number of a graph is larger than or equal to the domination number).  Our current work proposes a general feedback structure of which~\cite{sigmetrics2014} and~\cite{Cohen2016} can be viewed as a special case. Moreover, we present algorithms that benefit significantly from the knowledge of the graph feedback structure. 

The setting of combinatorial bandits (CMAB) by~\cite{chen2013combinatorial} is also closely related to our work. In  CMAB, a subset of base actions with unknown distributions form super actions and in each round, choosing a super action reveals outcomes of its constituent actions. The reward obtained is a function of these outcomes. The number of super actions and their composition in terms of base actions is assumed to be arbitrary and the policies do not utilize the underlying network structure between base actions and super actions. In contrast, in our work, we derive a regret lower bound in terms of the underlying network structure and propose policies that achieve this bound. This results in markedly improved performance when the number of super actions is not substantially larger than the number of base actions.  

\section{Problem Formulation}\label{sec:modelSO}
{In this section, we formally define the general bandit problem in the presence of side observations across actions. Let $\mathcal{N}=\{1,\ldots,N\}$ denote the collection of {\it base-arms} with unknown distributions. Subsets of base-arms form {\it actions}, and are indexed by $\mathcal{K}=\{1,\ldots,K\}$. A  decision maker must choose an action $j\in\cK$ at each time $t$ and observes the rewards of related base-arms. Let $X_{i}(t)$ be the reward of base-arm $i$ observed by the decision maker (on choosing some action) at time $t.$ We assume that $\{X_{i}(t), t\ge 0\}$ are independent and identically distributed (i.i.d.) for each $i$ and $\{X_{i}(t), \forall i \in \mathcal{N}\}$ are independent for each time $t.$ Let $V_j\subseteq \cN$ be the subset of base-arms that are observed when playing action $j$. Then, we define $S_i=\{j:i\in V_j\}$ as the support of base-arm $i$, i.e., the decision maker gets observations for base-arm $i$ on playing action $j\in S_i.$ When the decision maker chooses action $j$ at time $t$, he or she observes one realization for each of the random variables $X_i(t)$, $i\in V_j$. The reward of the played action $j$ depends on the outcomes of its related base-arms subset, denoted by $\cK_j\subseteq \cN$, and some known function $f_j(\cdot)$. Note that $\cK_j\subseteq V_j$ because there may be some base-arms that can be observed by action $j$ but not counted as reward in general (see Figure \ref{fig:examp} for a concrete example). Let the vector $\vec{X}_j(t)=[X_i(t)]_{i\in \cK_j}$ denote the collection of random variables associated with the reward of action $j$. Then the reward from playing action $j$ at time $t$ is given by $f_j(\vec{X}_j(t))$. We assume that the reward is bounded in $[0,1]$ for each action. Note that we only assume that the reward function $f_j(\cdot)$ is bounded and the specific form of $f_j(\cdot)$ and $\cK_j$ are determined by the decision maker or the specific problem. Let $\mu_{j}$ be the mean of reward on playing action $j.$}

 {\noindent {\bf Side-observation model :}} 
The actions $\cK$ and base-arms $\cN$ form nodes in a network $G,$ represented by a bipartite graph $(\cK,\cN,E)$ and the collection $\{\cK_j\}_{j\in \cK}$. The $N\times K$ adjacency matrix $E=[e_{ij}]$ is defined by $e_{ij}=1$ if $i\in V_j$ and $e_{ij}=0$ otherwise. If there is an edge between action $j$ and base-arm $i$, i.e., $e_{ij}=1$, then we can observe a realization of base-arm $i$ when choosing action $j$. Intuitively, the bipartite graph determined by $\{V_j\}_{j\in \cK}$ describes the side-observation relationships while the collection $\{\cK_j\}_{j\in \cK}$ captures the reward structure. Without loss of generality, we assume that $\cup_{i\in\cK}\cK_i=\cN$, which means that there are no useless (dummy) unknown base-arms in the network.



 \begin{figure}[h]
\centering
\includegraphics[width=3in]{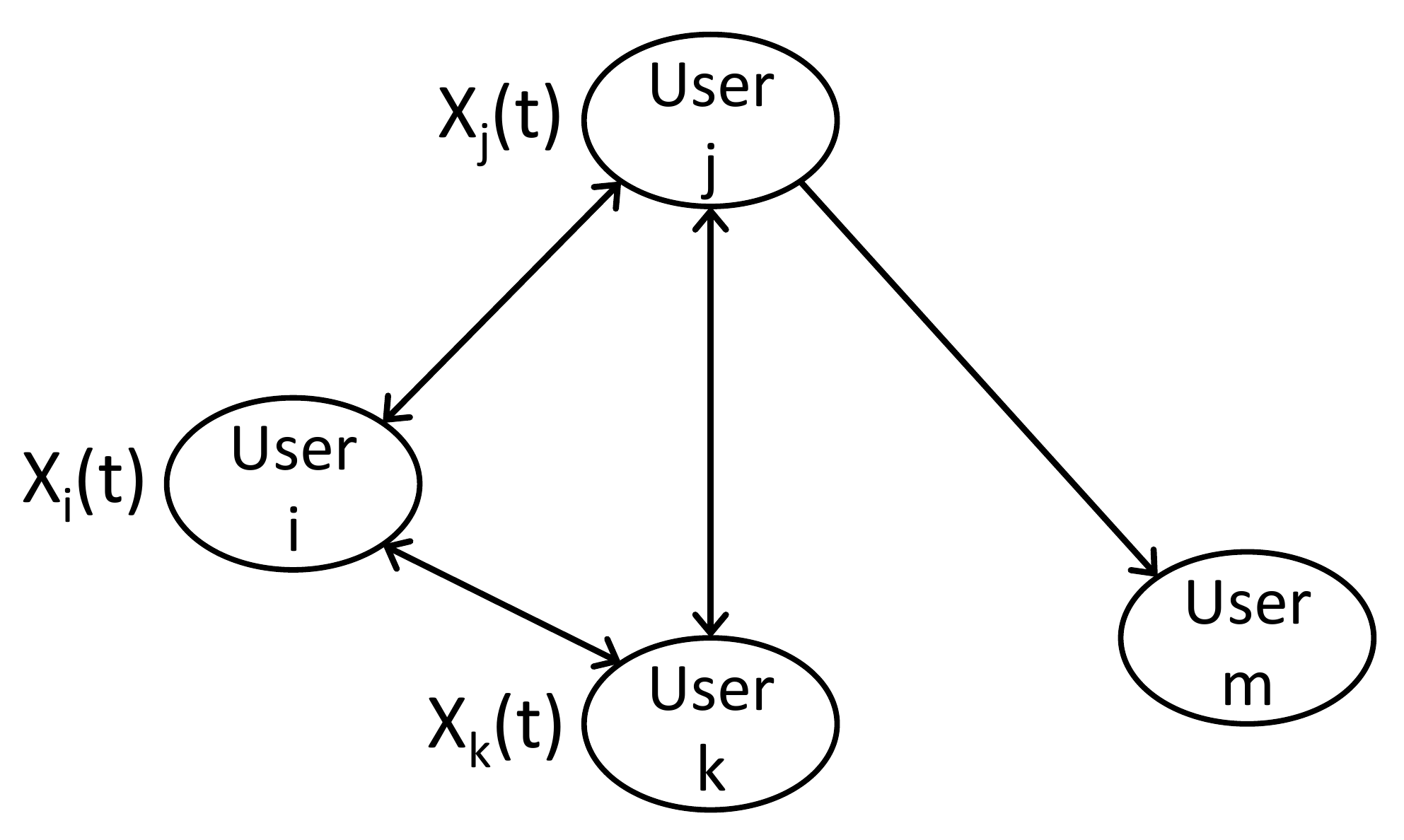}
\caption{At time $t,$ suppose that the decision maker chooses user $i$ to offer a promotion. He then receives a response $X_{i}(t)$ from user $i.$ Using the social interconnections, he also observes responses $X_{j}(t)$ and $X_{k}(t)$ of $i$'s neighbors $j$ and $k.$}
\label{fig:socnet}
\end{figure}
Figure~\ref{fig:socnet} illustrates the side-observation model for the example of targeting users in an online social network. 
Such side observations are made possible in settings of online social networks like Facebook by surveying or tracking a user's neighbors' reactions (likes, dislikes, no opinion, etc.) to the user's activity. This is possible when the online social network has a survey or a like/dislike indicator that generates side observations. For example, when user $i$ is offered a promotion, her neighbors may be queried as follows: ``User $i$ was recently offered a promotion. Would you also be interested in the offer?\footnote{Since, the neighbors do not have any information on whether the user $i$ accepted the promotion, they act independently according to their own preferences in answering this survey. The network itself provides a better way for surveying and obtaining side observations.}'' 

 \begin{figure}[h]
\centering
\includegraphics[width=3in]{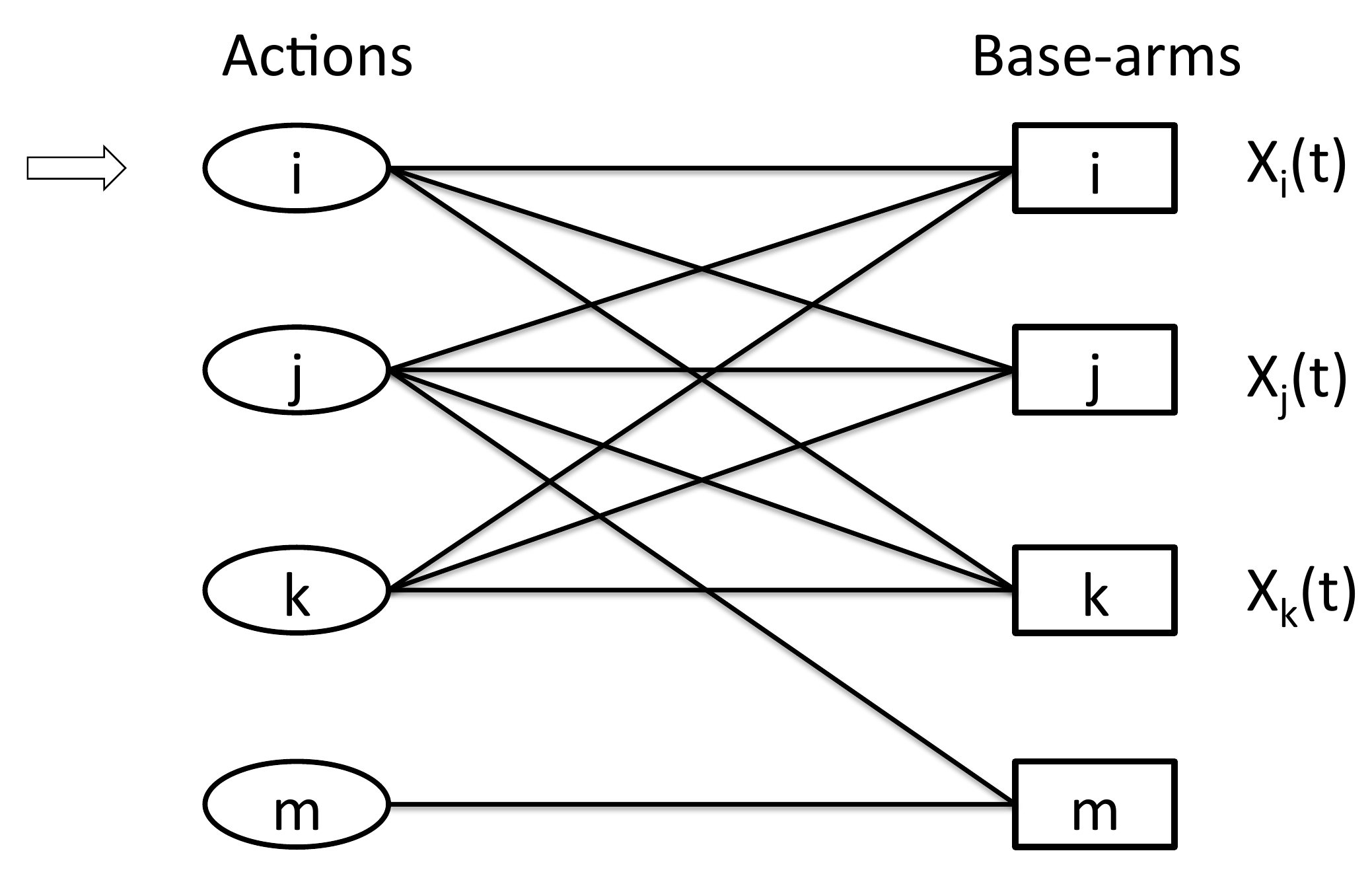}
\caption{Bipartite graph for the example of targeting users in online social network.}
\label{fig:examp}
\end{figure}
{Figure~\ref{fig:examp} shows the bipartite graph generated from the example shown in Figure~\ref{fig:socnet}. The set of base-arms is the set of users since they act independently according to their own preferences in the promotion, which are unknown to the decision maker. The set of actions is also the set of users because the decision maker wants to target the users with the maximum expected reward. When action $j$ (user $j$) is chosen, the decision maker observes $X_i(t)$, $X_j(t)$, $X_k(t)$ and $X_m(t)$ from user $i$, $j$, $k$ and $m$ since $V_j=\{i,j,k,m\}$. The reward of playing action $j$ depends on $\cK_j$ and $f_j(\cdot)$. Suppose $f_j(\vec{X}_j(t))=\sum_{i\in\cK_j}X_{i}(t)$. Then $\cK_j=\{i,j,k,m\}$ means that the reward is the sum of all positive feedbacks. It is also possible that the decision maker set $\cK_j=\{j\}$, which means that the reward of playing action $j$ is only the observation from the user $j$. 

The reward function can be quite general (but bounded) to accommodate different settings. Also, the bipartite graph can be more general than social networks in two key ways: 1) Connecting two users in two-hop neighborhood means that the reaction of the friend of my friend is also observable, which is true in Facebook. 2) Connecting two users, say $i$ and $j$, with similar preference profiles means that the network actively recommends the promotion received by user $i$ to user $j$ even though they are not friends. This has been widely applied in recommender systems such as Yelp.}

\noindent {\bf Objective:}  An allocation strategy or policy $\bm{\phi}$ chooses the action to be played at each time. Formally, $\bm{\phi}$ is a sequence of random variables $\{\phi(t), t \ge 0\},$ where $\phi(t)\in\mathcal{K}$ is the action chosen by policy $\bm{\phi}$ at time $t.$ 
Let $T^{\phi}_{j}(t)$ be the total number of times action $j$ is chosen up to time $t$ by policy $\bm{\phi}.$ For each action, rewards are only obtained when the action is chosen by the policy (side-observations do not contribute to the total reward). Then, the regret of policy $\bm{\phi}$ at time $t$ for a fixed $\bm{\mu}=(\mu_{1},\ldots,\mu_{K})$ is defined by 
\begin{align*}
R^{\phi}_{\bm{\mu}}(t)&=\mu^{*}t-\sum_{j=1}^{K}\mu_{j}\mathbb{E}[T^{\phi}_{j}(t)] =\sum_{j=1}^{K}\Delta_{j}\mathbb{E}[T^{\phi}_{j}(t)],
\end{align*}
where $\Delta_{j}\triangleq\mu^{*}-\mu_{j}$ and $\displaystyle \mu^{*}\triangleq\max_{j\in\cK}\mu_{j}.$ Henceforth, we drop the superscript $\phi$ unless it is required. 
The objective is to find policies that minimize the rate at which the regret grows as a function of time for every fixed network $G.$ We focus our investigation on the class of uniformly good policies~\citep{lairobbins} defined below: 

\noindent {\bf Uniformly good policies:}  An allocation rule $\bm{\phi}$ is said to be uniformly good if for every fixed $\bm{\mu},$ the following condition is satisfied as $t\rightarrow\infty:$ $$\displaystyle R_{\bm{\mu}}(t)=o(t^{b}), \mbox{ for every } b>0.$$ 

The above condition implies that uniformly good policies achieve the optimal long term average reward of $\displaystyle \mu^{*}.$ Next, we define two structures that will be useful later to bound the performance of allocation strategies in terms of the network structure $G.$


\begin{definition}
\label{def:mds}
A {\it hitting set} $D$ is a subset of $\cK$ such that $S_i\cap D\not=\emptyset,~\forall i\in\cN$. Then the hitting set number is $\gamma(G)=\inf_{D\subseteq\cK}\{|D|:S_i\cap D\not=\emptyset,~\forall i\in\cN\}$. For example, the set $\{i,m\}$ is a hitting set in Figure~\ref{fig:examp}.
\end{definition}


\begin{definition}
\label{def:clique}
A {\it clique} $C$ is a subset of $\cK$ such that $\cK_j\subseteq V_i$, $\forall i, j\in C$. This means that for every action $i$ in $C$, we can observe the reward of playing any action $j$ in $C$.
A {\it clique cover} $\cC$ of a network $G$ is a partition of all its nodes into sets $C\in\cC$ such that the sub-network formed by each $C$ is a clique. Let $\bar{\chi}(G)$ be the smallest number of cliques into which the nodes of the network $G$ can be partitioned, also called the clique partition number.
\end{definition}


\begin{proposition}
For any network G with bipartite graph $(\cK,\cN,E)$ and $\{\cK_j\}_{j\in\cK}$, if $\cup_{j\in\cK}\cK_j=\cN$, then $\gamma(G)\leq\bar{\chi}(G)$.
\end{proposition}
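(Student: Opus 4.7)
The plan is to construct an explicit hitting set from any optimal clique cover and thereby upper bound $\gamma(G)$ by $\bar{\chi}(G)$. Let $\cC$ be a clique cover of $G$ with $|\cC| = \bar{\chi}(G)$. From each clique $C \in \cC$, arbitrarily select one representative action $j_C \in C$, and form the set $D = \{j_C : C \in \cC\} \subseteq \cK$. By construction $|D| \leq \bar{\chi}(G)$, so it suffices to verify that $D$ is a hitting set in the sense of Definition~\ref{def:mds}.

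To check the hitting condition, fix an arbitrary base-arm $i \in \cN$. The standing assumption $\cup_{j \in \cK}\cK_j = \cN$ guarantees the existence of some action $j^\ast \in \cK$ with $i \in \cK_{j^\ast}$. Since $\cC$ partitions $\cK$, this $j^\ast$ lies in a unique clique $C^\ast \in \cC$, whose representative $j_{C^\ast}$ belongs to $D$. Applying the clique property of Definition~\ref{def:clique} to the pair $i = j_{C^\ast}$ and $j = j^\ast$ inside $C^\ast$ yields $\cK_{j^\ast} \subseteq V_{j_{C^\ast}}$. Consequently, $i \in V_{j_{C^\ast}}$, which by the definition of $S_i$ means $j_{C^\ast} \in S_i \cap D$, so this intersection is nonempty. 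Since $i$ was arbitrary, $D$ hits every base-arm and is therefore a valid hitting set.

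Combining these two observations gives $\gamma(G) \leq |D| \leq \bar{\chi}(G)$, as desired. The argument is essentially bookkeeping; the only conceptual step is to notice that the clique condition $\cK_j \subseteq V_i$ lets a single representative action in a clique "observe" every base-arm touched by any action in that clique. I do not anticipate any substantial obstacle: the main care-point is simply to invoke the assumption $\cup_{j\in\cK}\cK_j = \cN$ (without it, base-arms unreachable through any $\cK_j$ could fail to be hit) and to apply the clique definition with the two roles $i$ and $j$ in the correct order.
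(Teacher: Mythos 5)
Your proof is correct and follows essentially the same route as the paper's: select one representative action from each clique in an optimal clique cover, then use the coverage assumption $\cup_{j\in\cK}\cK_j=\cN$ together with the clique property $\cK_j\subseteq V_i$ to show the representatives form a hitting set. The only cosmetic difference is that you argue the hitting condition directly while the paper phrases it as a proof by contradiction; the underlying reasoning is identical.
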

\begin{proof}
Let $\mathcal{C}=\{C_1,C_2,...,C_m\}$ be a clique cover with cardinality $m$, i.e., $|\mathcal{C}|=m$ and each $C_k$ is a clique for $k=1,...,m$. Pick arbitrarily an element $a_k$ from $C_k$ for each $k$. Define $\mathcal{H}=\{a_k:k=1,...,m\}$. Now it remains to show that $\mathcal{H}$ is a hitting set, which implies $\gamma(G)\leq\bar{\chi}(G)$. We prove this by contradiction.

Suppose $\mathcal{H}$ is not a hitting set, then $\exists i\in\cN$ s.t. $S_i\cap\mathcal{H}=\emptyset$. Since $\cup_{j\in\cK}\cK_j=\cN$, $\exists j\in\cK$ s.t. $i\in\cK_j$. $\mathcal{C}$ is a clique cover, then $\exists k(j)\in\{1,2,...,m\}$ such that $j\in C_{k(j)}$. By the construction of $\mathcal{H}$, there exists $a_{k(j)}\in\mathcal{H}\cap C_{k(j)}$. By the definition of clique, we have $\cK_j\subseteq V_{a_{k(j)}}$. Thus, we have $a_{k(j)}\in S_i$ since $i\in\cK_j$. It follows that $S_i\cap\mathcal{H}\not=\emptyset$, which contradicts to $S_i\cap\mathcal{H}=\emptyset$. Hence, $\mathcal{H}$ is a hitting set. 
\end{proof}

In the next section, we obtain an asymptotic lower bound on the regret of uniformly good policies for the setting of MABs with side-observations. This lower bound is expressed as the optimal value of a linear program (LP), where the constraints of the LP capture the connectivity of each action in the network.

\section{Regret Lower Bound in the Presence of Side Observations} \label{sec:lb}
In order to derive a lower bound on the regret, we need some mild regularity assumptions~(Assumptions~\ref{eq:klcond1},~\ref{eq:klcond2}, and~\ref{eq:condo}) on the distributions $F_{i}$ (associated with base-arm $i$) that are similar to the ones in~\cite{lairobbins}. Let the probability distribution $F_{i}$ have a univariate density function $g(x;\theta_{i})$ with unknown parameters $\theta_{i}$, for each $i\in\mathcal{N}$. Let $D(\theta||\sigma)$ denote the Kullback Leibler (KL) distance between distributions with density functions $g(x;\theta)$ and $g(x;\sigma)$ and with means $u(\theta)$ and $u(\sigma)$ respectively. 
\begin{assumption} {(Finiteness)}\label{eq:klcond1} We assume that $g(\cdot;\cdot)$ is such that $ 0<D(\theta||\sigma)<\infty$ whenever $u(\sigma)>u(\theta).$
\end{assumption}
\begin{assumption}{(Continuity)}\label{eq:klcond2}
For any $\epsilon>0$ and $\theta,\sigma$ such that $u(\sigma)>u(\theta),$ there exists $\eta>0$ for which $|D(\theta||\sigma)-D(\theta||\rho)|<\epsilon$ whenever $u(\sigma)<u(\rho)<u(\sigma)+\eta.$\end{assumption}
\begin{assumption}{(Denseness)}\label{eq:condo}
For each $i\in\cN,$ $\theta_{i} \in \Theta$ where the set $\Theta$ satisfies: for all $\theta \in\Theta$ and for all $\eta > 0,$ there exists $\theta'\in\Theta$  such that  $u(\theta)<u(\theta')<u(\theta)+\eta.$
\end{assumption}

Let $\vec{\theta}$ be the vector $[\theta_1,\ldots,\theta_N]$. Define $\Theta_{i} =\{\vec{\theta}: \exists k \in {S}_{i} \mbox{ such that } \mu_{k}(\vec{\theta}) < \mu^{*}(\vec{\theta})\}.$ So, not all actions that support base-arm $i$ are optimal. Suppose $\vec{\theta} \in \Theta_{i}.$ For base arm $i,$ define the set $$\mathcal{B}_{i}(\theta_{i})=\{\theta'_{i}: \exists k \in {S}_{i} \mbox{ such that } \mu_{k}(\vec{\theta}'_{i}) >\mu^{*}(\vec{\theta})\},$$ where $\vec{\theta}'_{i}=[\theta_{1},\ldots,\theta'_{i},\ldots\theta_{N}].$  $\vec{\theta}'_{i}$ differs from $\vec{\theta}$ only in the $i^{th}$ parameter. In this set $\mathcal{B}_{i}(\theta_{i}),$ base-arm $i$ contributes towards a unique optimal action. Define constant $J_{i}(\theta_{i}) = \inf\{D(\theta_{i}||\theta'_{i}):\theta'_{i} \in \mathcal{B}_{i}(\theta_{i})\}.$ This is well-defined when $\mathcal{B}_{i}(\theta_{i}) \neq \emptyset.$
 
 The following proposition is obtained using Theorem~2 in~\cite{lairobbins}. It provides an asymptotic lower bound on the regret of any uniformly good policy under the model described in Section~\ref{sec:modelSO}:  
\begin{proposition}\label{prop:lbSO}
Suppose Assumptions~\ref{eq:klcond1},~\ref{eq:klcond2}, and~\ref{eq:condo} hold. Let $\cU=\{j:\mu_{j}<\mu^{*}\}$ be the set of suboptimal actions. Also, let $\Delta_{j}=\mu^{*}-\mu_{j}.$ Then, under any uniformly good policy $\bm{\phi},$ the expected regret is asymptotically bounded below as follows:
\begin{equation}\label{eq:cmu}\displaystyle \liminf_{t\rightarrow\infty}\frac{R_{\bm{\mu}}(t)}{\log(t)}\ge c_{\bm{\mu}},\end{equation}
where $c_{\bm{\mu}}$ is the optimal value of the following linear program (LP) $P_{1}$:
\begin{align*}
P_{1}:\  \min &\sum_{j\in\cU} \Delta_{j} w_{j}, \\
 \mbox{ subject to: } & \sum_{j\in S_{i}}w_{j}\ge \frac{1}{J_i(\theta_{i})},\ \forall i\in \cN,\\
&  w_{j}\ge 0, \ \forall j\in\cK.
\end{align*}
\end{proposition}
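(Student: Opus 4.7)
The plan is to follow the classical Lai--Robbins change-of-measure recipe, adapted to the side-observation model, and then package the resulting per-base-arm constraints together with the regret decomposition as the LP $P_1$.

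First I would introduce the normalized play counts
\[
w_j \;\triangleq\; \liminf_{t\to\infty}\frac{\bE[T_j(t)]}{\log t},\qquad j\in\cK,
\]
which are nonnegative and, by the regret identity $R_{\bm\mu}(t)=\sum_j \Delta_j\bE[T_j(t)]$, satisfy
\[
\liminf_{t\to\infty}\frac{R_{\bm\mu}(t)}{\log t}\;\ge\;\sum_{j\in\cU}\Delta_j\, w_j.
\]
Hence it suffices to show that the vector $(w_j)_{j\in\cK}$ is feasible for $P_1$; the bound will then follow by taking the infimum over all feasible vectors.

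The heart of the argument is to prove, for each base-arm $i\in\cN$ with $\vec\theta\in\Theta_i$ and $\mathcal{B}_i(\theta_i)\neq\emptyset$, the constraint
\[
\sum_{j\in S_i} w_j \;\ge\; \frac{1}{J_i(\theta_i)}.
\]
The key observation is that the total number of independent samples of base-arm $i$ collected up to time $t$ is exactly $N_i(t)\triangleq\sum_{j\in S_i}T_j(t)$, because playing any $j\in S_i$ reveals one fresh draw of $X_i$. Fix any $\theta_i'\in\mathcal{B}_i(\theta_i)$ and form the perturbed parameter vector $\vec{\theta}_i'$ that differs from $\vec{\theta}$ only in the $i$-th coordinate. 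Under $\vec{\theta}_i'$ there exists $k^\star\in S_i$ that becomes strictly optimal, while under $\vec{\theta}$ some action in $S_i$ was suboptimal. A standard likelihood-ratio argument (as in Theorem~2 of Lai and Robbins, or equivalently the change-of-measure lemma of Kaufmann--Capp\'e--Garivier) applied to the product log-likelihood $\sum_{s\le N_i(t)} \log\frac{g(X_i(s);\theta_i)}{g(X_i(s);\theta_i')}$ --- together with the uniformly good property, which forces $\bE_{\vec{\theta}_i'}[t-T_{k^\star}(t)]=o(t^b)$ for every $b>0$ --- yields
\[
\liminf_{t\to\infty}\frac{\bE[N_i(t)]}{\log t}\;\ge\;\frac{1}{D(\theta_i\,\|\,\theta_i')}.
\]
Taking the infimum over $\theta_i'\in\mathcal{B}_i(\theta_i)$ and invoking Assumptions~\ref{eq:klcond1}--\ref{eq:condo} to ensure the infimum is attained by $J_i(\theta_i)$ delivers $\liminf_t \bE[N_i(t)]/\log t \ge 1/J_i(\theta_i)$. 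By Fatou's inequality applied to the finite sum $N_i(t)=\sum_{j\in S_i}T_j(t)$, this lower bound passes through to the $w_j$'s:
\[
\sum_{j\in S_i} w_j \;\ge\; \liminf_{t\to\infty}\frac{\bE[N_i(t)]}{\log t}\;\ge\;\frac{1}{J_i(\theta_i)}.
\]

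Combining the regret lower bound with the feasibility of $(w_j)$ for $P_1$ yields $\liminf_t R_{\bm\mu}(t)/\log t\ge c_{\bm\mu}$. The one subtle point I expect to be the main obstacle is the change-of-measure step: one must show that the perturbation affects \emph{only} the distribution of the samples $X_i(1),X_i(2),\ldots$, so that the log-likelihood accumulates at rate $D(\theta_i\|\theta_i')$ per observation of base-arm $i$ (not per pull of any particular action), and then combine this with the uniformly good property in a form that accounts for the fact that multiple actions in $S_i$ may each contribute to $N_i(t)$. Assumptions~\ref{eq:klcond2} and~\ref{eq:condo} are used exactly to pass from the $\inf$ over $\mathcal{B}_i(\theta_i)$ to $J_i(\theta_i)$ without gaps, and to guarantee that perturbations $\theta_i'$ realizing any value in a neighborhood above $\mu^\star$ actually exist in $\Theta$.
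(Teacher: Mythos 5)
Your overall route coincides with the paper's: a Lai--Robbins change-of-measure argument applied per base-arm to the aggregated count $M_i(t)=\sum_{j\in S_i}T_j(t)$ (the paper's Proposition~\ref{prop:lailb}), followed by packaging the constraints and the regret decomposition into LP $P_1$. The change-of-measure content is sound, and your identification of the subtle point (the perturbation alters only base-arm $i$'s samples, so the log-likelihood grows at rate $D(\theta_i\|\theta_i')$ per observation of $i$, regardless of which action in $S_i$ produced it) is exactly what the paper's proof exploits via $M_i(t)\ge T_{k}(t)$. However, there is a genuine gap in your final ``Fatou'' step. With $w_j\triangleq\liminf_t \bE[T_j(t)]/\log t$, the inequality you assert,
\begin{equation*}
\sum_{j\in S_i} w_j \;\ge\; \liminf_{t\to\infty}\frac{\bE[M_i(t)]}{\log t},
\end{equation*}
points the wrong way: $\liminf$ is superadditive, i.e.\ $\liminf_t(a_t+b_t)\ge\liminf_t a_t+\liminf_t b_t$, so the sum of liminfs is a \emph{lower} bound on the liminf of the sum, not an upper bound. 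Concretely, if two actions in $S_i$ alternate carrying the exploration load along interleaved subsequences of times, each individual $w_j$ can be arbitrarily small while $\liminf_t \bE[M_i(t)]/\log t\ge 1/J_i(\theta_i)$ still holds; your vector $(w_j)$ is then infeasible for $P_1$, and the concluding chain of inequalities collapses.

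The repair is standard and restores your argument. Assume the regret liminf is finite (otherwise the bound is trivial), pick a subsequence $t_n$ along which $R_{\bm{\mu}}(t_n)/\log t_n$ converges to its liminf, and pass to a further subsequence along which every $\bE[T_j(t_n)]/\log t_n$ converges in $[0,\infty]$; any infinite limits can occur only for optimal actions (finiteness of the regret liminf bounds the suboptimal coordinates), and those have $\Delta_j=0$ and only help the constraints. Define $w_j$ as these subsequential limits. Feasibility for $P_1$ now follows because a subsequential limit of $\sum_{j\in S_i}\bE[T_j(t)]/\log t$ dominates the liminf over all $t$, which your change-of-measure step correctly bounds below by $1/J_i(\theta_i)$; and along the chosen subsequence $R_{\bm{\mu}}(t_n)/\log t_n \to \sum_{j\in\cU}\Delta_j w_j \ge c_{\bm{\mu}}$. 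One minor additional caveat: the infimum defining $J_i(\theta_i)$ need not be attained, so rather than saying it ``is attained,'' you should, as the paper does, choose $\theta_i'\in\mathcal{B}_i(\theta_i)$ with $D(\theta_i\|\theta_i')\le(1+\epsilon)J_i(\theta_i)$ and let $\epsilon\to0$; Assumptions~\ref{eq:klcond2} and~\ref{eq:condo} guarantee such $\theta_i'$ exist.
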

\begin{proof}{\it (Sketch)} Let $M_{i}(t)$ be the total number of observations corresponding to base-arm $i$ available at time $t.$ Then, by modifying the proof of  Theorem~2 of~\cite{lairobbins}, we have that, for $i \in \cN,$  $$\displaystyle \liminf_{t\rightarrow\infty}\frac{\mathbb{E}[M_{i}(t)]}{\log(t)}\ge\frac{1}{J_i(\theta_{i})}.$$
An observation is received for base-arm $i$ whenever any action in $S_{i}$ is chosen. Hence, $\displaystyle M_{i}(t)=\sum_{j\in S_{i}}T_{j}(t).$ These two facts give us the constraints in LP $P_{1}.$ See Appendix~\ref{app:lbSO} for the full proof.
\end{proof}
The linear program given in $P_1$ contains the graphical information that governs the lower bound. However, it requires the knowledge of $\vec{\theta}$, which is unknown. This motivates the construction of the following linear program, LP $P_2$, which preserves the graphical structure while eliminating the distributional dependence on $\vec{\theta}.$
\begin{align*}
P_{2}: \min &\sum_{j\in\cK} z_{j}\\
\mbox{subject to: } &\sum_{j\in S_{i}}z_{j} \ge 1, \ \forall i\in \cN,\\
\mbox{and } &z_{j}\ge 0, \ \forall j\in\cK.
\end{align*}
Let $\mathbf{z}^{*}=(z^{*}_{j})_{j\in\cK}$ be the optimal solution of LP $P_{2}.$ In Sections~\ref{sec:epsgreedySO} and~\ref{sec:ucblp}, we use the above LP $P_{2}$ to modify the $\epsilon$-greedy policy in~\cite{AuerFinite} and UCB policy in~\cite{AuerRevised} for the setting of side-observations. We provide regret guarantees of these modified policies in terms of the optimal value $\sum_{j\in \cK}z_{j}^{*}$ of LP $P_{2}.$ We note that the linear program $P_{2}$ is, in fact, the LP relaxation of the minimum hitting set problem on network $G.$ Since, any hitting set of network $G$ is a feasible solution to the LP $P_{2},$ we have that the optimal value of the LP $\sum_{j\in \cK}z_{j}^{*}\le \gamma(G)\le \bar{\chi}(G).$ 

\begin{proposition}\label{prop:lp2opt}
Consider an Erdos-Renyi random bipartite graph $(\cK,\cN,E)$ such that each entry of the matrix $E$ equals $1$ with probability $p$, where $0<p<1$. Suppose $\cup_{j\in\cK}\cK_j=\cN$, i.e., there are no useless base-arms in the network, then $\sum_{j\in \cK}z_{j}^{*}$ is upper-bounded by $\log_{\frac{1}{1-p}}N$ as $N\rightarrow\infty$ in probability.
\end{proposition}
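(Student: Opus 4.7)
The plan is to combine two facts: the LP-vs-integer inequality $\sum_{j\in\cK}z_j^*\le\gamma(G)$ already recorded in the paragraph immediately preceding the proposition, and a first-moment / union-bound argument showing that in a random bipartite graph with connection probability $p$ the hitting-set number is at most $\log_{1/(1-p)}N$ up to lower-order terms with high probability.

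I would start by re-invoking the reduction: the $\{0,1\}$-indicator of any hitting set $D\subseteq\cK$ (set $z_j=1$ for $j\in D$, else $0$) is a feasible integer solution of LP $P_2$, since $|S_i\cap D|\ge 1$ for every $i\in\cN$. Hence $\sum_{j\in\cK}z_j^*\le\gamma(G)$, and it suffices to exhibit, for every $\epsilon>0$, a random set $D$ of size at most $\lceil (1+\epsilon)\log_{1/(1-p)}N\rceil$ which is a hitting set with probability tending to one.

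Next I would sample $D$ uniformly at random from $\cK$ with prescribed size $k=\lceil (1+\epsilon)\log_{1/(1-p)}N\rceil$ (for large $N$ we tacitly assume $|\cK|\ge k$, which is necessary for the bound to have content and is consistent with the coverage hypothesis $\cup_j\cK_j=\cN$ forcing $|\cK|$ to grow with $N$). By the ER independence of the entries of $E$, conditionally on $D$, for every $i\in\cN$ the variables $\{e_{ij}\}_{j\in D}$ are i.i.d.\ Bernoulli$(p)$, so
\[
\bP[S_i\cap D=\emptyset]=(1-p)^k.
\]
Using the identity $(1-p)^{\log_{1/(1-p)}N}=1/N$ and a union bound over $i\in\cN$,
\[
\bP[D\text{ is not a hitting set}]\le N(1-p)^k\le N^{-\epsilon}\longrightarrow 0.
\]
Hence $\gamma(G)\le \lceil (1+\epsilon)\log_{1/(1-p)}N\rceil$ with probability approaching one; letting $\epsilon\downarrow 0$ yields the claimed bound $\sum_{j\in\cK}z_j^*\le(1+o(1))\log_{1/(1-p)}N$ in probability.

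No step presents a deep difficulty; the only point of care is the reading of the statement, which is the closest thing to an obstacle. At exactly $k=\log_{1/(1-p)}N$ the union bound $N(1-p)^k$ equals $1$, so the argument inherently requires a vanishing $\epsilon$ slack and the inequality must be interpreted asymptotically (i.e.\ $\sum_j z_j^*/\log_{1/(1-p)}N \le 1$ in probability as $N\to\infty$), rather than as a strict non-asymptotic inequality.
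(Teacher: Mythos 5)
Your proof is correct and follows essentially the same route as the paper's: both reduce the claim to bounding $\gamma(G)$ via $\sum_{j\in\cK}z_j^*\le\gamma(G)$, and both rest on the observation that a collection of about $\log_{1/(1-p)}N$ independent Bernoulli$(p)$ columns misses a given base-arm with probability $(1-p)^k$, so a union-bound-type estimate ($N(1-p)^k$ in your write-up, $1-(1-q^n)^N\le \frac{q^nN}{1-q}$ in the paper's) shows they form a hitting set with probability tending to one. The only cosmetic difference is packaging: the paper generates actions sequentially and analyzes the stopping time $\tau$ at which they first cover $\cN$ (proving two-sided concentration $\tau/\log_{1/(1-p)}N\to 1$, of which only the upper tail is used), whereas you fix the subset size $k=\lceil(1+\epsilon)\log_{1/(1-p)}N\rceil$ up front and invoke the one-sided bound directly, with the same $\epsilon$-slack and asymptotic reading of the statement that the paper implicitly adopts.
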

\begin{proof}{\it (sketch)}
Since $\sum_{j\in \cK}z_{j}^{*}\le \gamma(G)$, it remains to be shown that $\gamma(G)$ is upper bounded by the above result. Suppose there are no useless base-arms in the network. Then the set of all actions is a hitting set. Based on this observation, we construct a repeated experiment to generate actions sequentially. Then we define a stopping time $\tau$ as the first time that all the generated actions form a hitting set. Hence, we show the asymptotic result of $\tau$ as the upper bound of $\gamma(G)$. See full proof in Appendix~\ref{app:lp2opt}.
\end{proof}

In the next proposition, we provide a lower bound on $c_{\bm{\mu}}$ in Equation~(\ref{eq:cmu}) using the optimal solution $\mathbf{z}^{*}=(z^{*}_{j})_{j\in\cK}$ of LP $P_{2}.$
\vspace{0.2cm}
\begin{proposition}\label{prop:lbk}
Let $\cU=\{j:\mu_{j}<\mu^{*}\}$ be the set of suboptimal actions. Let $\cO=\{j:\mu_{j}=\mu^{*}\}$ be the set of optimal actions. Then, \begin{equation}\label{eq:lb}\frac{\max_{i\in\cN} J_i(\theta_i)}{\min_{j\in\cU}\Delta_j}c_{\bm{\mu}}+|\cO|\ge\sum_{j\in\cK}z^*_j\ge \frac{\min_{i\in\cN} J_i(\theta_i)}{\max_{j\in\cU}\Delta_j}c_{\bm{\mu}}.\end{equation} 
\end{proposition}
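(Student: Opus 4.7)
The inequality consists of an upper bound and a lower bound on $\sum_{j\in\cK} z_j^*$ in terms of $c_{\bm{\mu}}$. The plan is to prove each by constructing a feasible solution of one LP from an optimal solution of the other and tracking the cost change via bounds on $\Delta_j$ and $J_i(\theta_i)$.

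For the lower bound $\sum_j z_j^* \geq \frac{\min_i J_i(\theta_i)}{\max_{j\in\cU}\Delta_j}\,c_{\bm{\mu}}$, let $J_{\min}=\min_{i\in\cN}J_i(\theta_i)$ and take $\mathbf{z}^*$ to be an optimal solution of $P_2$. I would set $w_j = z_j^*/J_{\min}$ for every $j\in\cK$. Feasibility for $P_1$ is immediate: for each $i\in\cN$, $\sum_{j\in S_i} w_j = \frac{1}{J_{\min}}\sum_{j\in S_i} z_j^* \geq \frac{1}{J_{\min}} \geq \frac{1}{J_i(\theta_i)}$. Since $w$ is feasible, $c_{\bm{\mu}}\leq\sum_{j\in\cU}\Delta_j w_j \leq \max_{j\in\cU}\Delta_j\cdot\sum_{j\in\cU}w_j\leq \frac{\max_{j\in\cU}\Delta_j}{J_{\min}}\sum_{j\in\cK}z_j^*$, which rearranges to the desired lower bound.

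For the upper bound $\frac{\max_i J_i(\theta_i)}{\min_{j\in\cU}\Delta_j}\,c_{\bm{\mu}} + |\cO| \geq \sum_j z_j^*$, let $J_{\max}=\max_{i\in\cN}J_i(\theta_i)$ and let $\mathbf{w}^*$ be an optimal solution of $P_1$. Here one has to be careful because optimal actions $j\in\cO$ do not appear in the $P_1$ objective, so constraints of $P_1$ for base-arms $i$ with $S_i\cap\cO\neq\emptyset$ might be satisfied ``for free'' by putting mass on $w^*_{j}$, $j\in\cO$. To convert to a $P_2$-feasible vector, I would define
\[
z_j=\begin{cases} J_{\max}\,w_j^*, & j\in\cU,\\ 1, & j\in\cO. \end{cases}
\]
For any $i\in\cN$ with $S_i\cap\cO\neq\emptyset$, pick such a $j\in S_i\cap\cO$ and $\sum_{j\in S_i}z_j\geq z_j=1$. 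Otherwise $S_i\subseteq\cU$, and $\sum_{j\in S_i}z_j = J_{\max}\sum_{j\in S_i}w_j^* \geq J_{\max}/J_i(\theta_i)\geq 1$; so $z$ is $P_2$-feasible. Thus $\sum_j z_j^* \leq \sum_j z_j = J_{\max}\sum_{j\in\cU}w_j^* + |\cO|$, and using $\sum_{j\in\cU}w_j^* \leq c_{\bm{\mu}}/\min_{j\in\cU}\Delta_j$ (since $c_{\bm{\mu}}=\sum_{j\in\cU}\Delta_j w_j^*$), the bound follows.

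The only subtlety, and the reason the $+|\cO|$ term is needed on the upper side, is precisely the asymmetric treatment of optimal actions in $P_1$'s objective; other than that the proof is a routine LP comparison via scaling. I would expect no technical obstacle beyond keeping this asymmetry explicit in the construction of $z$.
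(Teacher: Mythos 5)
Your proof is correct and follows essentially the same route as the paper: both directions are obtained by scaling an optimal solution of one LP by $\max_{i\in\cN} J_i(\theta_i)$ or $\min_{i\in\cN} J_i(\theta_i)$ to produce a feasible solution of the other, then comparing objectives via $\min_{j\in\cU}\Delta_j$ and $\max_{j\in\cU}\Delta_j$. Your explicit choice $z_j=1$ for $j\in\cO$ (with the case split on whether $S_i$ meets $\cO$) in fact makes rigorous a step the paper leaves implicit: its chain $\sum_{j\in\cK}z_j^*\le\sum_{j\in\cU}z_j+|\cO|$ requires exactly this truncation, since the values $\left(\max_{i\in\cN} J_i(\theta_i)\right)w_j^*$ on optimal actions $j\in\cO$ need not be bounded by $1$.
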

\begin{proof}{\it (Sketch)} 
Using the optimal solution of LP $P_{1},$ we construct a feasible solution satisfying constraints in LP $P_{2}$ for base-arms in $\cN$.
The feasible solution constructed in this way gives an upper bound on the optimal value of LP $P_{2}$ in terms of the optimal value of LP $P_{1}.$ For the lower bound, we use the fact that any feasible solution of $P_{2},$ in particular $\mathbf{z}^{*},$ can be used to construct a feasible solution of $P_{1}.$ See Appendix~\ref{app:lbk} for the full proof.
\end{proof}
We note that $\sum_{j\in\cK}z^*_j =\Theta(c_{\mu})$ completely captures the time dependence of the regret on network structure under the following assumption:
\begin{assumption}\label{assum:bounded}
The quantities $|\cO|,$ $\displaystyle \min_{j\in\cU}\Delta_{j},$ and $\displaystyle \min_{i\in\cN} J_i(\theta_i)$ are constants that are independent of network size $K$ and $N$.
\end{assumption}
Note that the constants in the above assumption are unknown to the decision maker. In the next section, we propose the $\epsilon_{t}$-greedy-LP policy which achieves the regret lower bound of $c_{\bm{\mu}}\log(t)$ up to a multiplicative constant factor that is independent of the network structure and time.

\section{Epsilon-greedy-LP policy} \label{sec:epsgreedySO}
{Motivated by the LPs $P_{1}$ and $P_{2},$ we propose a \emph{network-aware} randomized policy called the $\epsilon_{t}$-greedy-LP policy. 
We provide an upper bound on the regret of this policy and show that it achieves the asymptotic lower bound, up to a constant multiplier, independent of the network structure. 
Let $\bar{f}_{j}(t)$ be the empirical average of observations (rewards and side-observations combined) available for action $j$ up to time $t.$ The $\epsilon_{t}$-greedy-LP policy is described in Algorithm~\ref{alg:epsgreedyk}. The policy consists of two iterations - exploitation and exploration, where the exploration probability decreases as $1/t,$ similarly to that of the $\epsilon_{t}$-greedy policy proposed by~\citet{AuerFinite}. However, in our policy, we choose the exploration probability for action $j$ to be proportional to $z_{j}^{*}/t,$ where $\mathbf{z}^{*}$ is the optimal solution of LP $P_{2},$ while in the original policy in~\citet{AuerFinite}, the exploration probability is uniform over all actions. 
}

\begin{algorithm}[ht]
\caption{: $\epsilon_{t}$-greedy-LP}
\begin{algorithmic}
\label{alg:epsgreedyk}
\item[]
\item[{\bf Input}:] $c>0,$ $0<d<1,$ optimal solution $\mathbf{z} ^{*}$ of LP $P_{2}.$
\FOR{each time $t$}
\vspace{0.1cm}
\item[] Update $\bar{f_j}(t)$ for each $j\in\cK$, where $\bar{f_j}(t)$ is the empirically average over all the observations of action $j$.
\item[] Let $\displaystyle \epsilon(t)= \min\left(1,\frac{c\sum_{j\in\cK} z_{j}^{*}}{d^{2}t}\right)$ and $\displaystyle a^{*}=\arg\max_{j\in\cK}\bar{f_j}(t)$.
\vspace{0.1cm}
\item[] Sample $a$ from the distribution such that $\displaystyle \mathbb{P}\{a=j\}=\frac{z_j^*}{\sum_{i\in\cK} z_{i}^{*}}$ for all $j\in\cK$.
\item[] Play action $\phi(t)$ such that
\begin{eqnarray}\label{phi_t}
\phi(t)=
\begin{cases}
a, &\text{with probability } \epsilon(t)\cr
a^*, &\text{with probability } 1-\epsilon(t)
\end{cases}
\end{eqnarray}
\ENDFOR
\end{algorithmic}\end{algorithm}
The following proposition provides performance guarantees on the $\epsilon_{t}$-greedy-LP policy:
\begin{proposition}\label{prop:epsgreedySO}
For $\displaystyle 0<d<\min_{j\in\cU}\Delta_{j},$ any $c>0,$ and $\alpha>1,$ the probability with which a suboptimal action $j$ is selected by the $\epsilon_{t}$-greedy-LP policy, described in Algorithm~\ref{alg:epsgreedyk}, for all $\displaystyle t>t'=\frac{c\sum_{i\in\cK}z_{i}^{*}}{d^{2}}$ is at most
\begin{equation}\label{eq:epsub}\left(\frac{c}{d^{2}t}z_{j}^{*}\right)+\frac{2\lambda c \delta}{\alpha d^2}\left(\frac{et^\prime}{t}\right)^{cr/\alpha d^2} \log \left(\frac{e^2t}{t^\prime}\right)+\frac{4}{\Delta_j^2}\left(\frac{et^\prime}{t}\right)^{\frac{c\Delta_j^2}{2\alpha d^2}},\end{equation}
where $r=\frac{3(\alpha-1)^{2}}{8\alpha-2},$ $\lambda=\max_{j\in\cK}|\cK_j|$, and $\delta=\max_{i\in \cN}|S_i|$ is the maximum degree of the supports in the network. Note that $\alpha$ is a parameter we introduce in the proof, which is used to determine a range for the choice of parameter $c$ as shown in Corollary~\ref{prop:epsgreedyk}.
\end{proposition}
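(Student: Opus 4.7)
Following the template of the $\epsilon_t$-greedy analysis in~\cite{AuerFinite}, I would condition on whether round $t$ is an exploration or an exploitation round and bound each contribution separately. The exploration piece is immediate: if round $t$ is an exploration round, action $j$ is chosen with probability $z_j^*/\sum_i z_i^*$, and for $t>t'$ we have $\epsilon(t)=c\sum_i z_i^*/(d^2 t)$, so the exploration contribution to $\mathbb{P}\{\phi(t)=j\}$ is exactly $cz_j^*/(d^2 t)$, i.e. the first term of~(\ref{eq:epsub}). The exploitation contribution is bounded by $\mathbb{P}\{\bar{f}_j(t)\ge\bar{f}_{j^*}(t)\}$, where $j^*$ denotes any optimal action.

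Since $d<\Delta_j$, I would use the standard split
$\{\bar{f}_j(t)\ge\bar{f}_{j^*}(t)\}\subseteq\{\bar{f}_j(t)\ge\mu_j+\Delta_j/2\}\cup\{\bar{f}_{j^*}(t)\le\mu^*-\Delta_j/2\}$ and control each event by Hoeffding's inequality, conditional on a high-probability lower bound on the observation count $M_i(t)$ of every base-arm $i$ relevant to $\bar{f}_j$ or $\bar{f}_{j^*}$. Here the LP $P_2$ structure is essential: the constraint $\sum_{k\in S_i}z_k^*\ge 1$ forces every exploration round to observe base-arm $i$ with conditional probability at least $1/\sum_\ell z_\ell^*$, so
\begin{equation*}
\mathbb{E}[M_i(t)]\ \ge\ \frac{1}{\sum_\ell z_\ell^*}\sum_{\tau=1}^{t}\epsilon(\tau)\ \ge\ \frac{c}{d^{2}}\log\frac{et}{t'},\qquad t>t'.
\end{equation*}
A Bernstein-type martingale concentration would then give $M_i(t)\ge (c/\alpha d^{2})\log(et/t')$ except on a small event, the free parameter $\alpha>1$ controlling the deviation-to-mean ratio and producing the exponent $r=3(\alpha-1)^{2}/(8\alpha-2)$.

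Putting the pieces together, the second term of~(\ref{eq:epsub}) is the probability that $M_i(t)$ fails this lower bound for some base-arm $i$ feeding $\bar{f}_j$ or $\bar{f}_{j^*}$: the factor $\lambda=\max_j|\cK_j|$ arises via a union bound over $\cK_j\cup\cK_{j^*}$, while $\delta=\max_i|S_i|$ comes from the Bernstein variance estimate that accounts for the up to $\delta$ correlated base-arm updates generated in a single round. The third term is the Hoeffding deviation bound conditional on the observation lower bound: substituting $M_i(t)\ge(c/\alpha d^{2})\log(et/t')$ into $2\exp(-2M_i(\Delta_j/2)^{2})$ yields a factor $(et'/t)^{c\Delta_j^{2}/(2\alpha d^{2})}$, and a geometric sum over the admissible values of $M_i(t)$ introduces the $4/\Delta_j^{2}$ prefactor. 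The main obstacle I expect is the Bernstein step: the indicators ``base-arm $i$ is observed in round $\tau$'' are not independent across $\tau$, since they depend on the empirical history through the greedy choice $a^{*}=\arg\max_k\bar{f}_k(\tau)$, so the argument has to be cast as a conditional/martingale concentration, and matching the precise coefficient $\lambda c\delta/(\alpha d^{2})$ in front of the logarithm requires careful per-round variance accounting.
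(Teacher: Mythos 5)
Your skeleton coincides with the paper's proof: the exploration term is handled identically, the exploitation term is reduced to $\bP[\bar{f}_j(t)\ge\bar{f}^*(t)]$ and split into the two deviation events, Chernoff--Hoeffding controls the deviations conditional on the observation count, and Bernstein with the slack parameter $\alpha$ applied to the exploration-generated counts produces exactly $r=\frac{3(\alpha-1)^2}{8\alpha-2}$. However, two of your mechanisms are wrong. The more substantive one is your account of $\delta$: it does \emph{not} come from "per-round variance accounting of up to $\delta$ correlated base-arm updates." The paper applies Bernstein separately to each base-arm count $O_i^R(t)$ (observations of base-arm $i$ generated by exploration rounds only), which increments by at most one per round, so its variance is bounded simply by its mean $\alpha m_i$, where $m_i=\frac{1}{\alpha}\frac{\sum_{j\in S_i}z_j^*}{\sum_{j\in\cK}z_j^*}\sum_{m=1}^t\epsilon(m)$; no $\delta$ appears in the concentration step at all. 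Instead $\delta$ enters deterministically: since $\mathbf{z}^*$ is optimal for $P_2$ we have $z_j^*\le 1$, hence $\sum_{j\in S_i}z_j^*\le\delta$ and $m_i\le\frac{c\delta}{\alpha d^2}\log\left(\frac{e^2t}{t'}\right)$. This quantity multiplies the Bernstein failure probability because of how the paper decomposes over the count value: it writes $\bP\left[\bar{f}_j(t)\ge\mu_j+\frac{\Delta_j}{2}\right]\le\sum_{m=1}^{t}\bP\left[\min_{i\in\cK_j}O_i^R(t)\le m\right]e^{-\Delta_j^2m/2}$ and splits the sum at $m_0=\min_i m_i$, so the head contributes the prefactor $m_0$ (whence $\frac{c\delta}{\alpha d^2}\log(e^2t/t')$, times $\lambda$ from the union bound over $i\in\cK_j$) and the tail contributes the geometric factor $\frac{2}{\Delta_j^2}e^{-\Delta_j^2m_0/2}$. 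Chasing the coefficient $\lambda c\delta/(\alpha d^2)$ through a variance estimate, as you propose, is a dead end.

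Second, the "main obstacle" you flag --- dependence of the observation indicators on the greedy history --- is spurious, and the step that dissolves it is the one your outline omits: do not concentrate $M_i(t)$ itself, but lower-bound $M_i(t)\ge O_i^R(t)$ and concentrate only the exploration count. The indicator that base-arm $i$ is observed by exploration at time $\tau$ is Bernoulli with parameter $\epsilon(\tau)\sum_{j\in S_i}z_j^*/\sum_{a\in\cK}z_a^*$, driven entirely by the exogenous randomization (the $\epsilon(\tau)$ coin and the $z^*$-proportional draw) and independent of the empirical means; these indicators are therefore independent across rounds and plain Bernstein for independent summands suffices (the paper's Lemma~\ref{lem:bernstein} is stated with conditional variances, but no martingale argument is actually needed). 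With these two corrections your argument closes; in fact, the cleaner split $\bP[\bar{f}_j\ge\cdot]\le\bP[\min_{i\in\cK_j}O_i^R(t)< m_0]+\bP[\bar{f}_j\ge\cdot,\ \min_{i\in\cK_j}O_i(t)\ge m_0]$ that your outline suggests would give a middle term of order $\lambda\left(\frac{et'}{t}\right)^{cr/\alpha d^2}$ with no $\delta\log$ prefactor, i.e., slightly stronger than the stated bound --- the $\log(e^2t/t')$ factor in~(\ref{eq:epsub}) is slack created by the paper's particular sum-splitting.
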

\begin{proof}{\it(Sketch)} Since $\mathbf{z}^{*}$ satisfies the constraints in LP $P_{2},$ there is sufficient exploration within each suboptimal action's neighborhood. The proof is then a combination of this fact and the proof of Theorem~3 in~\cite{AuerFinite}. In particular, we derive an upper bound for the probability that suboptimal action $j$ is played at each time and then sum over the time. See Appendix~\ref{app:epsgreedySO} for the full proof.
\end{proof}
In the above proposition, for large enough $c,$ we see that the second and third terms are $O(1/t^{1+\epsilon})$ for some $\epsilon>0$~\citep{AuerFinite}. Using this fact, the following corollary bounds the expected regret of the $\epsilon_{t}$-greedy-LP policy:
\begin{corollary}\label{prop:epsgreedyk}
Choose parameters $c$ and $d$ such that, $$\displaystyle 0<d<\min_{j\in\cU}\Delta_{j},\quad  \mbox{and}\quad \displaystyle c>\max(2 \alpha d^{2}/r,4\alpha),$$ for any $\alpha>1.$ Then, the expected regret at time $T$ of the $\epsilon_{t}$-greedy-LP policy described in Algorithm~\ref{alg:epsgreedyk} is at most
\begin{equation}\label{eq:epsubk}\left(\frac{c}{d^{2}}\sum_{j\in\cU}\Delta_{j}z_{j}^{*}\right)\log(T)+O(K),\end{equation}
where the $O(K)$ term captures constants independent of time but dependent on the network structure. In particular, the $O(K)$ term is at most
\begin{equation*}
\sum_{j\in\cU}\left[\frac{\pi^2\lambda c\delta\Delta_j}{3\alpha d^2}\left(et'\right)^{cr/\alpha d^2} + \frac{2\pi^2}{3\Delta_j}\left(et'\right)^{\frac{c\Delta_j^2}{2\alpha d^2}}\right],
\end{equation*}
where $t^\prime$, $r$, $\lambda$ and $\delta$ are defined in Proposition~\ref{prop:epsgreedySO}.
\end{corollary}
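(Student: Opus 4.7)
The plan is to derive the regret expression $R_{\bm{\mu}}(T)=\sum_{j\in\cU}\Delta_j\,\mathbb{E}[T_j(T)]$ and then bound $\mathbb{E}[T_j(T)]=\sum_{t=1}^{T}\mathbb{P}(\phi(t)=j)$ by integrating the per-step bound from Proposition~\ref{prop:epsgreedySO}. First I would split the time horizon into $t\le t'$ and $t>t'$, where $t'=c\sum_{i\in\cK}z_i^{*}/d^{2}$. For $t\le t'$ I simply use $\mathbb{P}(\phi(t)=j)\le 1$, contributing at most $\Delta_j t'$ to the regret of action $j$; this is time-independent and can be absorbed into the $O(K)$ constant. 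For $t>t'$ I would invoke the three-term bound of Proposition~\ref{prop:epsgreedySO} and handle each term separately.

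The first term, $\tfrac{c}{d^{2}t}z_j^{*}$, summed over $t=1,\dots,T$ gives at most $\tfrac{c}{d^{2}}z_j^{*}\log(T)$ (plus a constant from comparing the harmonic sum to $\log$). After multiplying by $\Delta_j$ and summing over $j\in\cU$, this yields precisely the leading $\bigl(\tfrac{c}{d^{2}}\sum_{j\in\cU}\Delta_j z_j^{*}\bigr)\log(T)$ term in the claimed bound.

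The second and third terms are where the parameter restrictions enter. Writing the third term as $\tfrac{4}{\Delta_j^{2}}(et')^{c\Delta_j^{2}/(2\alpha d^{2})}\,t^{-c\Delta_j^{2}/(2\alpha d^{2})}$, the hypothesis $c>4\alpha$ together with $d<\min_{j\in\cU}\Delta_j$ forces the exponent $c\Delta_j^{2}/(2\alpha d^{2})>2$, so $\sum_{t\ge 1}t^{-c\Delta_j^{2}/(2\alpha d^{2})}\le \pi^{2}/6$, producing the constant $\tfrac{2\pi^{2}}{3\Delta_j}(et')^{c\Delta_j^{2}/(2\alpha d^{2})}$ after the multiplication by $\Delta_j$ from the regret formula. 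Analogously, for the second term, $c>2\alpha d^{2}/r$ guarantees $cr/(\alpha d^{2})>2$; the extra $\log(e^{2}t/t')$ factor can be absorbed by using a tiny fraction of the surplus exponent to keep the sum summable and bounded by another $\pi^{2}/6$, producing the $\tfrac{\pi^{2}\lambda c\delta\Delta_j}{3\alpha d^{2}}(et')^{cr/(\alpha d^{2})}$ contribution.

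The main obstacle is not the first two terms, which are straightforward, but getting the $\log(e^{2}t/t')$ factor in the second term cleanly absorbed with the stated $\pi^{2}/3$ constant. I would handle it by noting that the strict inequality $c>2\alpha d^{2}/r$ gives some slack $\varepsilon>0$ in the exponent, allowing the bound $\log(e^{2}t/t')\le C_\varepsilon t^{\varepsilon}$ while still keeping $\sum t^{-(2+\varepsilon)}\le \pi^{2}/6$, or, more carefully, by a direct integration-by-parts-style estimate on $\sum_{t>t'}(t'/t)^{2}\log(e^{2}t/t')$. Finally I would collect all time-independent contributions (the $t\le t'$ part plus the two convergent tail sums) into the explicit $O(K)$ expression stated in the corollary.
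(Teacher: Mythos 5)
Your proposal is correct and follows essentially the same route as the paper, which proves the corollary exactly this way: sum the three-term bound of Proposition~\ref{prop:epsgreedySO} over $t$, let the harmonic first term produce the $\bigl(\tfrac{c}{d^2}\sum_{j\in\cU}\Delta_j z_j^*\bigr)\log(T)$ coefficient, and use $c>\max(2\alpha d^2/r,\,4\alpha)$ together with $d<\min_{j\in\cU}\Delta_j$ to force both tail exponents $cr/(\alpha d^2)$ and $c\Delta_j^2/(2\alpha d^2)$ above $2$, so the tails sum via $\sum_{t\ge1}t^{-2}=\pi^2/6$ (and the $\log(e^2t/t')$ factor is handled by a direct estimate such as $\sum_{t>t'}t^{-2}\log(e^2t/t')\le 3/t'<\pi^2/6$, as in your second suggested option). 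One bookkeeping refinement worth noting: for $t\le t'$ the policy has $\epsilon(t)=1$, so $\mathbb{P}(\phi(t)=j)=z_j^*/\sum_i z_i^*$ and that segment contributes only $(c/d^2)z_j^*$, which merges with the harmonic tail into $(c/d^2)z_j^*\log T$ because $t'\ge e$; this is what makes the ``in particular'' constant consist of just the two tail sums, whereas your cruder $\mathbb{P}(\phi(t)=j)\le1$ bound on $[1,t']$ fits inside the stated explicit expression only because its $\pi^2/6$ factors are loose.
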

\begin{remark} Under Assumption~\ref{assum:bounded}, we can see from Proposition~\ref{prop:lbk} and Corollary~\ref{prop:epsgreedyk} that, $\epsilon_{t}$-greedy-LP algorithm is \emph{order optimal} achieving the lower bound $\displaystyle \Omega\left(\sum_{j\in\cK}z^{*}_{j}\log(T)\right)=\Omega\left(c_{\bm{\mu}}\log(T)\right)$ as the network and time scale.  
\end{remark}
While the $\epsilon_{t}$-greedy-LP policy is network aware, its exploration is oblivious to the observed average rewards of the sub-optimal actions. Further, its performance guarantees depend on the knowledge of $\min_{j\in\cU} \Delta_{j},$ which is the difference between the best and the second best optimal actions. On the other hand, the UCB-LP policy proposed in the next section is network-aware taking into account the average rewards of suboptimal actions. This could lead to better performance compared to $\epsilon_{t}$-greedy-LP policy in certain situations, for example, when the action with greater $z_j^*$ is also highly suboptimal.
\section{UCB-LP policy}\label{sec:ucblp}
In this section we develop the UCB-LP policy defined in Algorithm~\ref{alg:ucblp} and obtain upper bounds on its regret. The UCB-LP policy is based on the improved UCB policy proposed in~\citet{AuerRevised}, which can be summarized as follows: the policy estimates the values of $\Delta_{i}$ in each round by a value $\tilde{\Delta}_{m}$ which is initialized to 1 and halved in each round $m$. By each round $m$, the policy draws $n(m)$ observations for each action in the set of actions not eliminated by round $m,$ where $n(m)$ is determined by $\tilde{\Delta}_{m}.$ Then, it eliminates those actions whose UCB indices perform poorly. Our policy differs from the one in~\cite{AuerRevised} by accounting for the presence of side-observations - this is achieved by choosing each action according to the optimal solution of LP $P_{2},$ while ensuring that $n(m)$ observations are available for each action not eliminated by round $m.$ 

\begin{algorithm}[H]
\caption{: UCB-LP policy}
\begin{algorithmic}
\label{alg:ucblp}
\item[{\bf Input}:] Set of actions $\cK,$ time horizon $T,$ and optimal solution $\mathbf{z} ^{*}$ of LP $P_{2}.$
\vspace{0.1in}
\item[{\bf Initialization}:] Let $\tilde{\Delta}_{0}:=1,$ $A_{0}:=\cK,$ and $B_{0}:=\cK$
\FOR{round $ m=0,1,2,\ldots,\lfloor\frac{1}{2}\log_{2}\frac{T}{e}\rfloor$} 
\item[] {\bf Action Selection:} Let $\displaystyle n(m):=\left\lceil \frac{2\log(T\tilde{\Delta}_{m}^{2})}{\tilde{\Delta}_{m}^{2}}\right\rceil$ \\
\vspace{0.1in}
\item[] \textbf{If} $|B_{m}| =1$: choose the single action in $B_{m}$ until time $T.$
\vspace{0.1in}
\item[] \textbf{Else If} $\displaystyle\sum_{i\in \cK}z_{i}^{*} \leq 2|B_{m}|\tDelta_{m}:$ $\forall j\in A_m$, choose action $j$ $\left[z_{j}^{*}(n(m)-n(m-1))\right]$ times.
\vspace{0.1in}
\item[] \textbf{Else} For each action $j$ in $B_{m},$ choose $j$ for $\left[n(m)-n(m-1)\right]$ times.\\
\vspace{0.1in}
\item[] Update $\bar{f}_{j}(m)$ and $T_{j}(m)$ for each $j\in\cK$, where $\bar{f}_{j}(m)$ is the empirical average reward of action $j,$ and $T_{j}(m)$ is the total number of observations for action $j$ up to round $m$.

\vspace{0.1in}
\item[] {\bf Action Elimination:} \\ To get $B_{m+1},$ delete all actions $j$ in $B_{m}$ for which
$$\bar{f}_{j}(m)+\sqrt{\frac{\log(T\tilde{\Delta}_{m}^{2})}{2T_{j}(m)}}< \max_{a \in B_{m}}\left\{\bar{f}_{a}(m)-\sqrt{\frac{\log(T\tilde{\Delta}_{m}^{2})}{2T_{a}(m)}}\right\},$$
\vspace{0.1in}
\item[] {\bf Reset:} \\ The set $A_{m+1}$ is given as $A_{m+1}=\bigcup_{i\in D_{m+1}} S_i$, where $D_{m+1}=\bigcup_{j\in B_{m+1}}\mathcal{K}_j$.\\ Let $\tilde{\Delta}_{m+1}=\frac{\tilde{\Delta}_{m}}{2}.$ \\
\vspace{0.1in}
\ENDFOR
\end{algorithmic}
\end{algorithm}

The following proposition provides performance guarantees on the expected regret due to UCB-LP policy:
\begin{proposition}\label{prop:ucblp} 
For action $j,$ define round $m_{j}$ as follows: $$m_j:=\min\left\{m : \tilde{\Delta}_{m}<\frac{\Delta_j}{2}\right\}.$$
Define $\displaystyle \bar{m}=\min\left\{m: \sum_{j\in \cK}z_{j}^{*} > \sum_{j:m_{j}>m}2^{-m+1}\right\}$ and the set  $\displaystyle B =\{j \in \cU:m_{j}>\bar{m}\}.$
Then, the expected regret due to the UCB-LP policy described in Algorithm~\ref{alg:ucblp} is at most
\begin{equation}\label{eq:ucblp} \sum_{j\in \cU\setminus B} \Delta_{j} z_{j}^{*}\frac{32\log(T\hat{\Delta}_{j}^{2})}{\hat{\Delta}_{j}^{2}} + \sum_{j\in B} \frac{32\log(T\Delta_{j}^{2})}{\Delta_{j}}+O(K^{2}),\end{equation}
where $\hat{\Delta}_{j}=\max\{2^{-\bar{m}+2},\min_{a:j\in G_a}\{\Delta_{a}\}\}$, $G_a=\cup_{i\in\cK_a}S_i$, and $(z_{j}^{*})$ is the solution of LP $P_{2}.$ The $O(K^{2})$ term captures constants independent of time. Further, under Assumption~\ref{assum:bounded}, the regret is also at most \begin{equation}\label{eq:ucblp2}O\left(\sum_{j\in\cK}z_{j}^{*}\log(T)\right) + O(K^{2}),\end{equation} where $(z_{j}^{*})$ entirely captures the time dependence on network structure.
\end{proposition}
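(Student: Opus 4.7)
The plan is to follow the improved UCB analysis of Auer and Ortner, adapting each step to account for the two-regime action selection and the LP-based exploration. The high-level structure is to define clean events under which all confidence intervals hold simultaneously, show that under these events every suboptimal action $j$ is eliminated from $B_m$ by round $m_j+1$, and then count the number of plays that action $j$ accumulates up to its elimination round. The two regret terms correspond to the two regimes: actions in $\cU \setminus B$ are eliminated while the algorithm is still in the LP allocation regime and therefore only contribute at a rate scaled by $z_j^*$, whereas actions in $B$ survive into the direct-play regime and contribute at a rate independent of $z_j^*$.

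For the concentration step, define the clean event $\mathcal{E}_m = \{|\bar{f}_j(m) - \mu_j| \le \sqrt{\log(T\tilde\Delta_m^2)/(2T_j(m))}\ \forall j \in B_m\}$. A Hoeffding bound plus a union bound over actions gives $\Pr[\mathcal{E}_m^c] \le 2|B_m|/(T\tilde\Delta_m^2)$. Under the intersection of these events up to round $m_j$, the standard argument shows that if $T_j(m_j) \ge n(m_j)$, then the UCB for the suboptimal action $j$ lies strictly below the LCB of the optimal one, so $j \notin B_{m_j+1}$. The crucial new ingredient is verifying $T_j(m_j) \ge n(m_j)$ in the LP regime. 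Here I would use that $A_m = \bigcup_{i \in D_m} S_i$ with $D_m = \bigcup_{j \in B_m} \mathcal{K}_j$, so for every $i \in D_m$ the sum of allocations $\sum_{j' \in S_i \cap A_m} z_{j'}^* = \sum_{j' \in S_i} z_{j'}^* \ge 1$ by LP feasibility of $P_2$; this delivers at least $n(m)-n(m-1)$ observations of each base-arm in $\mathcal{K}_j$, and hence of the reward of $j$, during round $m$. In the direct regime action $j$ is played $n(m)-n(m-1)$ times explicitly, so the invariant $T_j(m) \ge n(m)$ is maintained across both regimes.

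For the regret count, I would track when each suboptimal $j$ could have been played. If $j \in \cU \setminus B$ then $m_j \le \bar m$, so $j$ is only played in the LP regime (while still active) at total rate at most $z_j^* n(m_j)$; the definition of $\hat\Delta_j$ reflects the fact that $j$ may be kept alive to serve as an observer for a more competitive neighbor $a$ with $j \in G_a$, in which case $T_j$ is controlled by $\Delta_a$ rather than $\Delta_j$, and the $2^{-\bar m+2}$ term caps the effective rate at which $j$ is played before the direct regime kicks in. Plugging $n(m_j) \le \lceil 32 \log(T\hat\Delta_j^2)/\hat\Delta_j^2 \rceil$ yields the first sum. If $j \in B$ then by definition $m_j > \bar m$, so $j$ is still active when the algorithm switches to the direct regime; from that point the standard improved UCB count bounds the plays of $j$ by $n(m_j) = O(\log(T\Delta_j^2)/\Delta_j^2)$, giving a regret contribution of $32\log(T\Delta_j^2)/\Delta_j$ per such action. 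Summing $\Pr[\mathcal{E}_m^c] \cdot T$ over all $O(\log T)$ rounds yields the $O(K^2)$ term (clean-event failures cost at most $T$ in regret each but have probability $O(K/T)$). The second bound follows because under Assumption~\ref{assum:bounded} the $\hat\Delta_j$ and $\Delta_j$ factors are bounded away from zero, so the first sum is $O(\sum_{j} z_j^* \log T)$ and the second is $O(|B| \log T) = O(\log T)$ absorbed into the leading term.

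The main obstacle is the LP-regime coverage argument in the elimination step: ensuring that even when action $j$ is never directly selected during round $m$, the LP allocation over $A_m$ still supplies enough synchronized observations of the base-arms in $\mathcal{K}_j$ to evaluate $\bar{f}_j(m)$ to the required accuracy. A secondary subtlety is the bookkeeping needed to justify replacing $\Delta_j$ by $\hat\Delta_j$ for $j \in \cU \setminus B$, which requires carefully tracking for how long $j$ is retained in $A_m$ on behalf of a neighboring action $a \in B_m$ with smaller gap.
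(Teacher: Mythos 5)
Most of your outline tracks the paper's own proof closely: the LP-coverage argument (using $A_m=\bigcup_{i\in D_m}S_i$, $D_m=\bigcup_{j\in B_m}\cK_j$ and feasibility $\sum_{j'\in S_i}z_{j'}^*\ge 1$ to maintain $T_j(m)\ge n(m)$ for all $j\in B_m$) is exactly the paper's Lemma~\ref{lem:prob}, the elimination-by-round-$m_j$ step is the same, and your counting for the two regimes, including $n_j=\min\{\bar m,\max_{a:j\in G_a}m_a\}$ and the role of $\hat\Delta_j$, matches the paper's terms. However, there is a genuine quantitative gap in your accounting for the $O(K^2)$ term. You propose to sum $\bP[\mathcal{E}_m^c]\cdot T$ over all rounds, asserting the failure probability is $O(K/T)$ per round. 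But by your own bound, $\bP[\mathcal{E}_m^c]\le 2|B_m|/(T\tilde\Delta_m^2)=O(K\,4^m/T)$, and at the final round $m_f=\lfloor\frac12\log_2(T/e)\rfloor$ this is $\Theta(K)$, not $O(K/T)$. Charging regret $T$ to each failed round and summing gives $\sum_{m=0}^{m_f}O(K\,4^m/T)\cdot T=O\bigl(K\,4^{m_f}\bigr)=O(KT)$, which is vacuous. The naive union bound over rounds with worst-case cost $T$ per failure cannot yield a time-independent remainder.

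The paper avoids this by conditioning on the round $m^*$ at which the last optimal action is eliminated and splitting the bad event in two. First, each suboptimal action $a$ is charged only for the single event $E_a^c$ that it survives its own critical round $m_a$, which has probability at most $2/(T\tilde\Delta_{m_a}^2)\le 32/(T\Delta_a^2)$ and worst-case cost $T\max_{j\in\cU}\Delta_j$; weighting by $\sum_{j\in\cU}\Delta_j$ gives the $O(K^2)$ term (the paper's term $(\romannumeral 2)$). Second, the event that an optimal action is prematurely eliminated at round $m^*$ is handled separately (term $(\romannumeral 1b)$): the key observation is that after such a failure only actions $j$ with $m_j>m^*$ remain, whose per-step regret satisfies $\Delta_j<4\tilde\Delta_{m^*}$, so the growing failure probability $2/(T\tilde\Delta_{m^*}^2)$ is multiplied by a per-step cost that shrinks at a compensating rate, and the sum over $m^*$ telescopes to $\sum_{j\in\cU}2^{2m_j+2}\le\sum_{j\in\cU}64/\Delta_j^2=O(K)$. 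Your proposal is missing precisely this balancing between the late-round failure probability and the smallness of the gaps of the actions that can still be played after a late failure; without it, the constant term in your bound does not come out to $O(K^2)$.
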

\begin{proof}{\it(Sketch)}
The $\log(T)$ term in the regret follows from the fact that, with high probability, each suboptimal action $j$ is eliminated (from the set $B_{m}$) on or before the first round $m$ such that $\tilde{\Delta}_{m}<\Delta_{j}/2.$ See Appendix~\ref{app:ucblpSO} for the full proof.
\end{proof}

\begin{remark} While $\epsilon_{t}$-greedy-LP does not require knowledge of the time horizon $T,$ UCB-LP policy requires the knowledge of $T.$ UCB-LP policy can be extended to the case of an unknown time horizon similar to the suggestion in~\cite{AuerRevised}. Start with $T_{0}=2$ and at end of each $T_{l},$ set $T_{l+1}=T_{l}^{2}.$ The regret bound for this case is shown in Proposition~\ref{app:unknown} in Appendix~\ref{sec:appendixSig}.
\end{remark}

Next, we briefly describe the policies UCB-N and UCB-MaxN proposed in~\citet{bhagat}. In the UCB-N policy, at each time, the action with the highest UCB index is chosen similar to UCB1 policy in~\citet{AuerFinite}. In UCB-MaxN policy, at each time $t$, the action $i$ with the highest UCB index is identified and its neighboring action $j$ with the highest empirical average reward at time $t$ is chosen.
\begin{remark}
\label{rem:diff}
The regret upper bound of UCB-N policy is 
$$\inf_{\cC} \sum_{C\in\cC} \frac{8\max_{j\in C}\Delta_{j}}{\min_{j\in C}\Delta_{j}^{2}}\log(T) + O(K),$$ where $\cC$ is a clique covering of the sub-network of suboptimal actions. The regret upper bound for UCB-MaxN is the same as that for UCB-N with an $O(|\cC|)$ term instead of the time-invariant $O(K)$ term. We show a better regret performance for UCB-LP policy and $\epsilon_{t}$-greedy-LP policies with respect to the $\log(T)$ term because $\sum_{i\in \cK}z_{i}^{*}\le \gamma(G)\le\bar{\chi}(G).$ However, the time-invariant term in our policies is $O(K)$ and $O(K^{2}),$ can be worse than the time-invariant term $O(|\cC|)$ in UCB-MaxN.

\end{remark}

\begin{remark}
All uniformly good policies that ignore side-observations incur a regret that is at least $\Omega(|\cU|\log(t))$~\cite{lairobbins}, where $|\cU|$ is the number of suboptimal actions. This could be significantly higher than the guarantees on the regret of both $\epsilon_{t}$-greedy-LP policy and UCB-LP policy for a rich network structure as discussed in Remark~\ref{rem:diff}. 
\end{remark}

\begin{remark}\label{rem:stochasticso} In our model, we assumed that the side observations are always available. However, in reality, side observations may only be obtained sporadically. Suppose that when action $j$ is chosen, side-observations of base-arms $i\in\cK_j$ are obtained almost surely and that of base-arms $i\in V_j\setminus \cK_j$ are obtained with a known probability $p_{j}.$ 
In this case, Proposition~\ref{prop:lbSO} holds with the replacement of LP $P_{1}$ with LP $P'_{1}$ as follows:
\begin{align*}
P'_{1}:\  \min &\sum_{j\in\cU} \Delta_{j} w_{j}, \\
 \mbox{ subject to: } & \sum_{j\in S_{i}}(w_{j}\mathbbm{1}_{\{i\in \cK_j\}}+p_jw_j\mathbbm{1}_{\{i\not\in\cK_j\}})\ge \frac{1}{J_i(\theta_{i})},\ \forall i\in \cN,\\
&  w_{j}\ge 0, \ \forall j\in\cK.
\end{align*}
Both of our policies work for this setting by changing the LP $P_{2}$ to $P'_{2}$ as follows: 
\begin{align*}
P'_{2}:\  \min &\sum_{j\in\cK} z_{j}\\
\mbox{subject to: } &\sum_{j\in S_{i}}(z_{j}\mathbbm{1}_{\{i\in \cK_j\}}+p_jz_j\mathbbm{1}_{\{i\not\in\cK_j\}})\ge 1, \ \forall i\in\cN,\\
\mbox{and } &z_{j}\ge 0, \ \forall j\in\cK.
\end{align*}
The regret bounds of our policies will now depend on the optimal solution of LP $P'_{2}.$ \\
\end{remark}
\section{Numerical Results}\label{sec:numerical}
\subsection{Algorithm Performance on Data Trace}
We consider the Flixster network dataset for the numerical evaluation of our algorithms. The authors in~\cite{jamali} collected this social network data, which contains about $1$ million users and $14$ million links. We use graph clustering by~\cite{Graclus} to identify two strongly clustered sub-networks of sizes $1000$ and $2000$ nodes. Both these sub-networks have a degree distribution that is a straight line on a log-log plot indicating a power law distribution commonly observed in social networks.~\footnote{We note that the social network of interest may or may not display a power law behavior. We find that the subgraphs of the Flixster network have a degree distribution that is a straight line on a log-log plot indicating a power law distribution display while the authors in~\cite{facebook} show that the degree distribution of the global Facebook network is not a straight line on log-log plot.}

Our empirical setup is as follows. Let $\cN$ be the set of users and $\cK=\cN$ . To be specific, each user in the network is offered a promotion at each time, and accepts the promotion with probability $\mu_{i}\in[0.3,0.9].$ Let $S_i$ be the set of one-hop neighbors in the social network of user $i$ (including user $i$). This is the setting when the Flixster has a survey or a like/dislike indicator that generates side observations of user's neighborhood. Let $\cK_j=\{j\}$ and $f_j(X_j)=X_j$, which means that the decision maker receives a random reward of $1$ if the chosen user $j$ accepts the promotion or $0$ reward otherwise. $\mu_{j}$ is chosen uniformly at random from $[0.3,0.8]$ and there are $50$ randomly chosen users with optimal $\mu_{j}=0.9.$ 

 \begin{figure}[ht]
\centering
\includegraphics[width=4.0in]{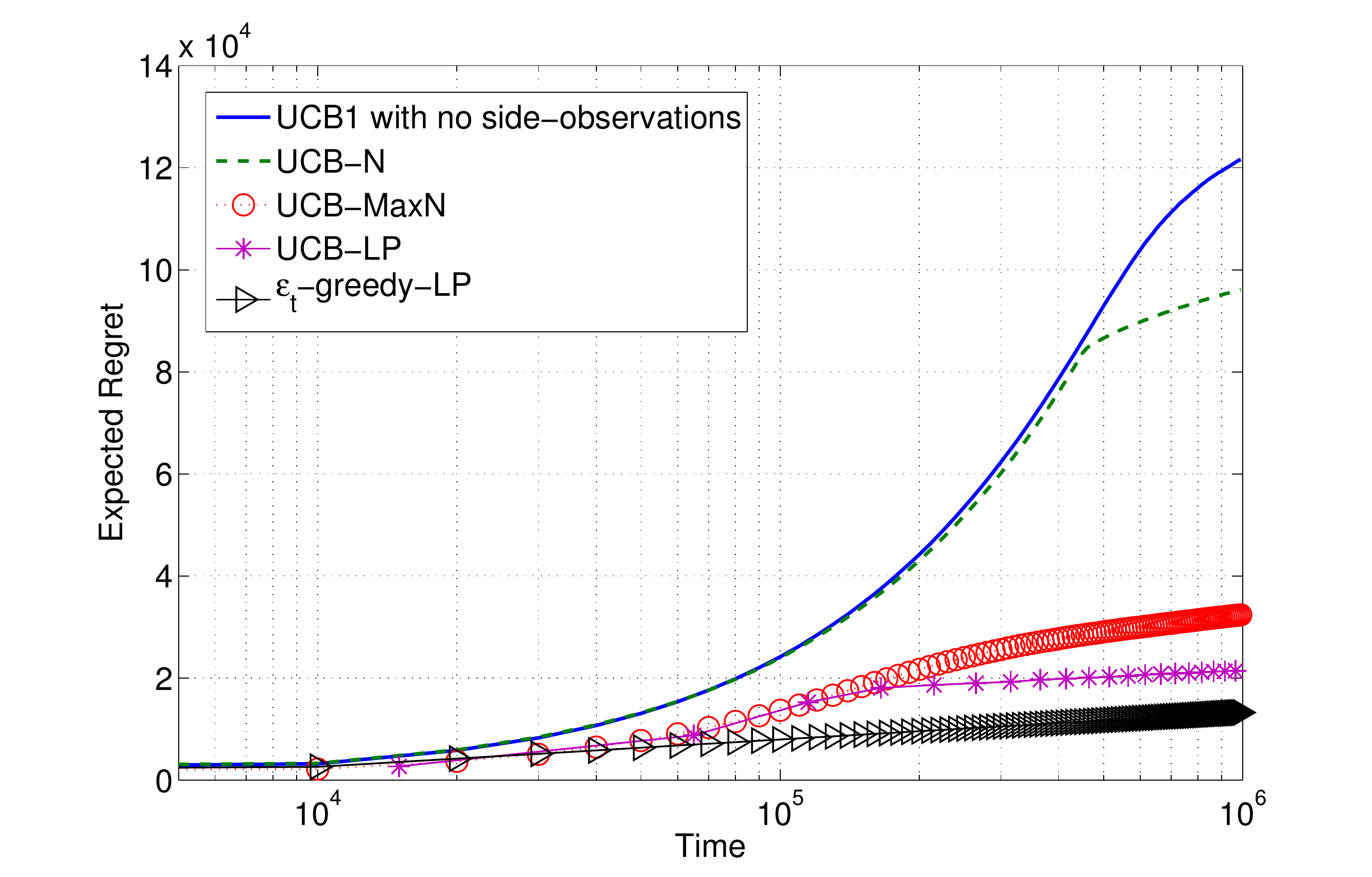}
\caption{Regret comparison of all the policies for a network of size $1000.$ }
\label{fig:1000}
\end{figure}
\begin{figure}[ht]
\centering
\includegraphics[width=4.0in]{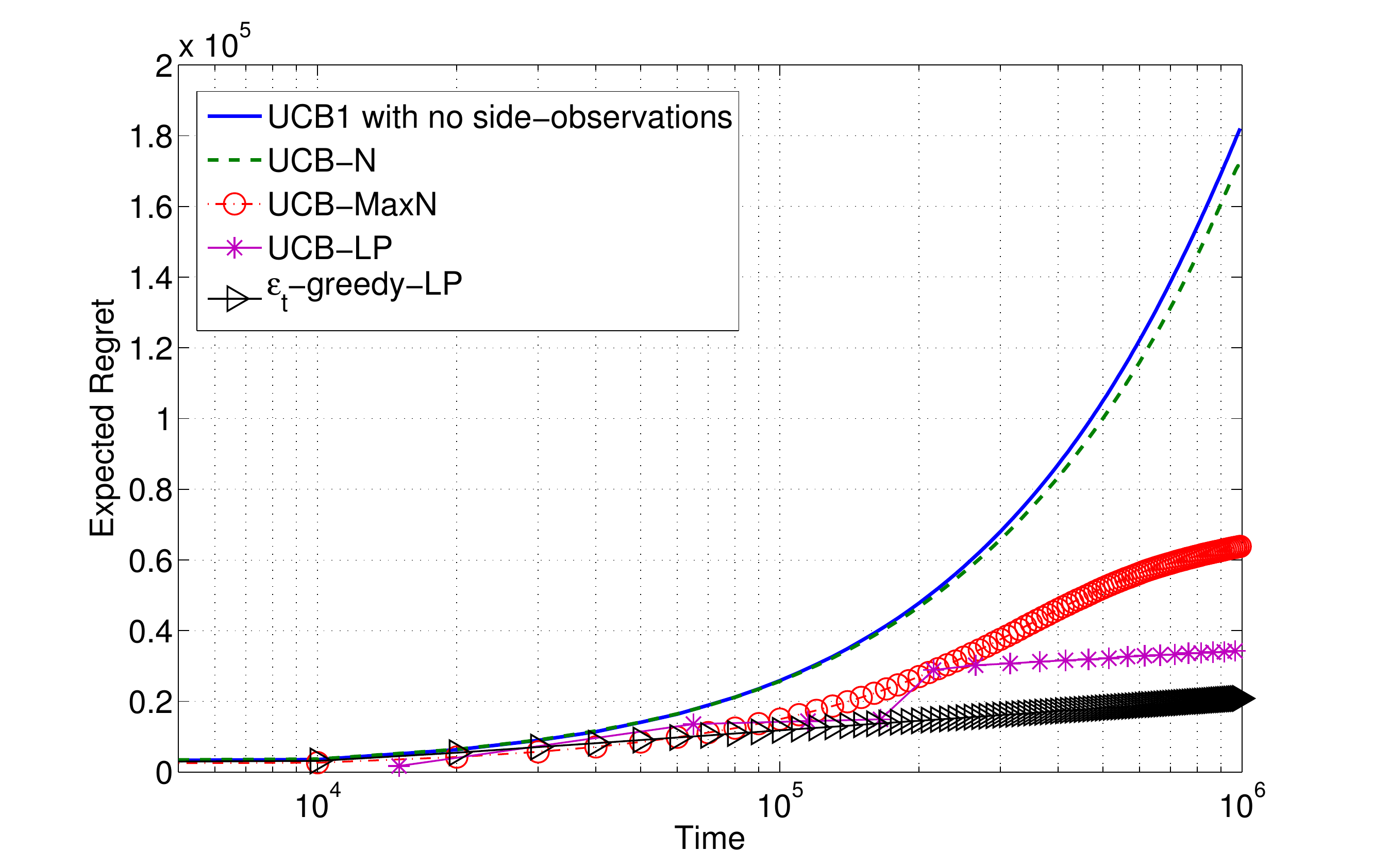}
\caption{Regret comparison of all the policies for a network of size $2000.$}
\label{fig:2000}
\end{figure}

Figures~\ref{fig:1000} and~\ref{fig:2000} show the regret performance as a function of time for the two sub-networks of sizes $1000$ and $2000$ respectively. Note that the average regret is taken over 1000 experiments. For the $\epsilon_{t}$-greedy-LP policy, we let $c=5$ and $d=0.2$ (we choose $d=0.2$ to show that our algorithm seems to have good performance in more general settings, even when the bounds in the Proposition~\ref{prop:epsgreedySO} are not known or used). For both networks, we see that our policies outperform the UCB-N and UCB-MaxN policies~\cite{bhagat} (UCB-N and UCB-MaxN policies can also be viewed as special cases of those proposed in~\cite{chen2013combinatorial} for this specific combinatorial structure). We also observe that the improvement obtained by UCB-N policy over the baseline UCB1 policy is marginal. 
It has been shown~\citep{cooper} that for power law graphs, both $\gamma(G)$ and $\bar{\chi}(G)$ scale linearly with $N,$ although $\gamma(G)$ has a lower slope. Our numerical results show that our policies outperform existing policies even for the Flixster network. 

As we show in Corollary~\ref{prop:epsgreedyk} and Proposition~\ref{prop:ucblp}, $\epsilon_t$-greedy-LP and UCB-LP have the same upper bound $O(\sum_{j\in\cU}z^*_j\log T)$. It is hard to say which algorithm outperforms the other one. In the Flixster network, we see that the $\epsilon_{t}$-greedy-LP policy performs better than the UCB-LP policy. As we show in Section~\ref{sec:routing}, UCB-LP performs better than $\epsilon_t$-greedy-LP. In addition, the regret gap between the $\epsilon_t$-greedy-LP and UCB-LP is not large compared to their gain to UCB-N and UCB-maxN.


\subsection{Algorithm Performance on Synthetic Data}\label{sec:routing}
{We consider a routing problem defined on a communication network, which is demonstrated as an undirected graph consisting of $6$ nodes in Figure~\ref{fig:routingnetwork}. We assume that node $1$ is the source node and node $6$ is the destination node. The decision maker repeatedly sends packets from the source node to the destination node. There exist $13$ simple paths from the source node to the destination node. The delay of each path is the sum of delays over all the constituent links. The goal of the decision maker is to identify the path with the smallest expected delay and minimize the regret as much as possible. 

Solving the routing problem is a direct application of this work once we let the set of paths be the set of actions and the set of links be the set of base-arms. We assume that the delay of each link $i$, denoted by $X_i(t)$, is an independent and identically distributed sequence of random variables (drawn from the steady-state distribution) over discrete time $t$. Then, this is a stochastic bandit problem with side-observations since playing one action (path) reveals some observations of some base-arms (links) that contribute to other actions (paths). For example, choosing path $(1,2,4,6)$ reveals the delay performance of link $(1,2)$ and $(2,4)$, which are included in the path $(1,2,4,5,6)$. 

In the routing problem, there are $13$ paths and $12$ directed links (note that some links are never used in all the paths). Thus, we set $K=13$ and $N=12$ in the simulation. Then, we construct the set $V_j$ for each action $j$ such that $i\in V_j$ if path $j$ traverses the link $i$. And, we set $\cK_j=V_j$ for each $j\in\cK$ since the delay of a path is the total delay of the traversed links. Let $B$ be the upper bound of all the action delays. Then, we choose the function $f_j(\vec{X}_j(t))=1-\sum_{i\in\cK_j}X_{i}(t)/B$ as the reward of playing action $j$ at time $t$. In the simulation, we assume that the delay of link $i$ is sampled from a Bernoulli distribution with mean $u_i$. Each $u_i$ is independently sampled from a uniform distribution from $0$ to $1$, which realizes the problem instance in Figure~\ref{fig:routingnetwork}\footnote{The number indicates the mean delay and the arrow indicates the direction of the link}. One can check that the optimal action (shortest path) is the path $(1,3,5,6)$ given the ground truth $\{u_i\}_{i\in\cN}$. We let $B=5$ in the simulation.

 \begin{figure}[ht]
\centering
\includegraphics[width=3.0in]{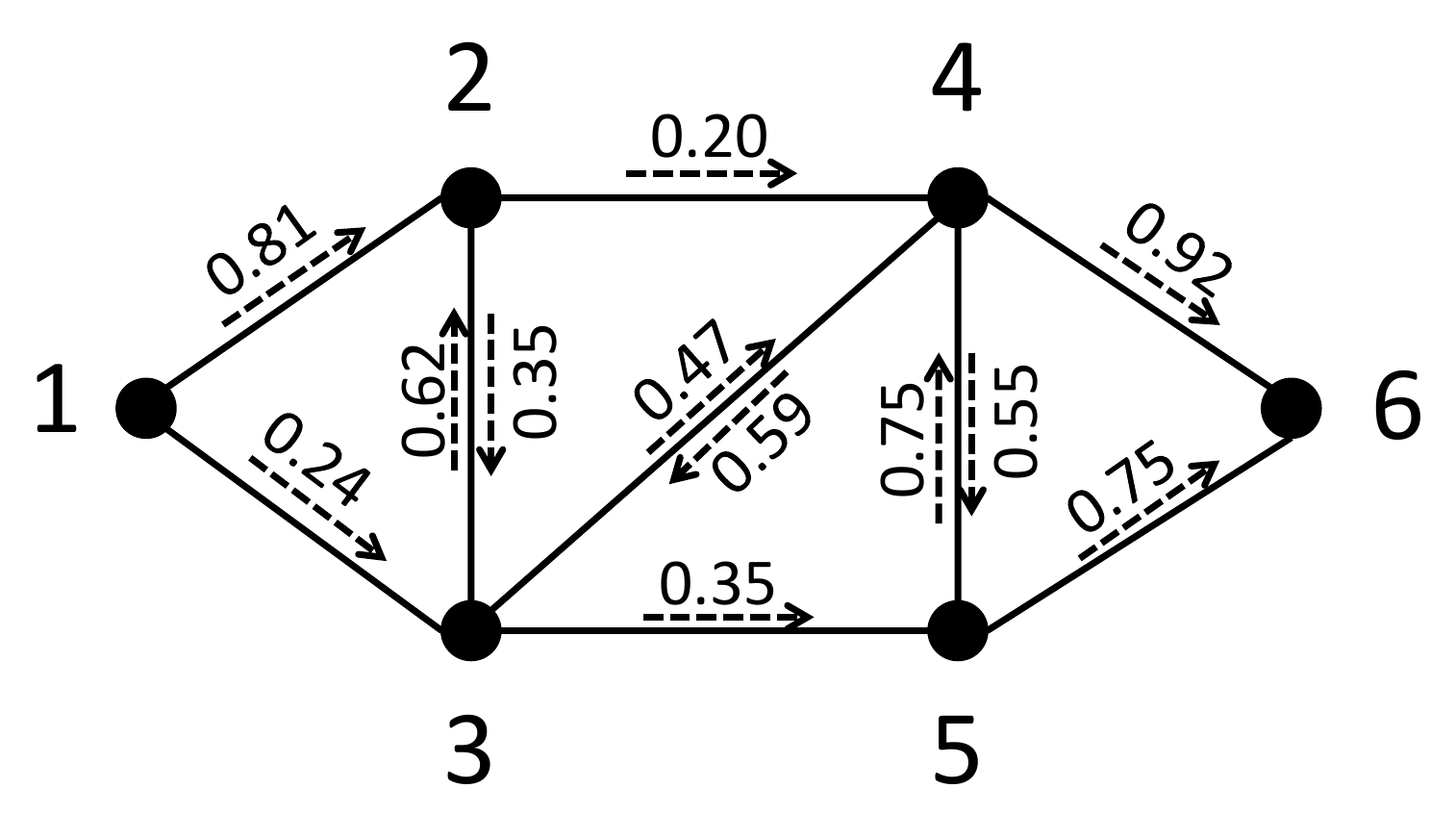}
\caption{Routing problem on a communication network}
\label{fig:routingnetwork}
\end{figure}
 \begin{figure}[ht]
\centering
\includegraphics[width=4.0in]{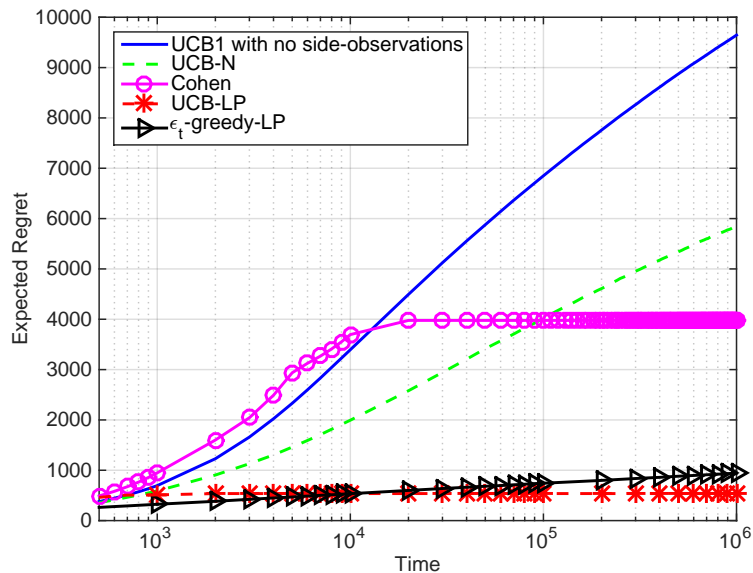}
\caption{Regret comparison of all the policies for the routing problem}
\label{fig:routing}
\end{figure}

We apply the UCB1, UCB-N, Cohen~\citep{Cohen2016} and our policies to the problem instance in Figure~\ref{fig:routingnetwork} and the regret performance, averaged over 1000 experiments, is shown in Figure~\ref{fig:routing}. We do not represent the result of the UCB-MaxN because it degenerates to the UCB-N policy. The reason is that there is no non-trivial clique (clique with more than one element) in this problem due to the setting $\cK_j=V_j$ and $\cK_j\neq\cK_a$ for any $j,a\in\cK$. Intuitively, there does not exist two paths that can observe the outcome of each other. For the $\epsilon_{t}$-greedy-LP policy, we let $c=4$ and $d=0.05.$ From the regret performance, we observe that the improvement obtained by the UCB-N policy against the UCB1 policy is considerably greater than the results in Figure~\ref{fig:1000} and Figure~\ref{fig:2000}. The reason behind this is that the bipartite graph in the routing problem is more dense and network size is small in the routing problem, which enables the UCB-N policy to take the advantage of side-observations. Overall, we see that our policies outperform the UCB-N policy and Cohen policy because our policies take the network structure into consideration, which enables us to trade off the exploration with exploitation more efficiently.} 

\subsection{Asymptotic Behavior}
We run a simulation to verify the result provided in Proposition~\ref{prop:lp2opt}. For each base-arm size $N$, we sequentially generate action $j$ such that $e_{ij}=1$ with probability $p$ for any $i\in \cN$ independently. Stopping time $\tau$ is the number of actions we have generated so that there are no useless base-arms in the network. Note that $\tau$ is an upper bound of $\gamma(G)$ and $\sum_{j\in\cK}z^*_j$ as shown in Appendix~\ref{app:lp2opt}. Then, we solve the linear program P2 to obtain $\sum_{j\in\cK}z^*_j$ and find a hitting set by a greedy algorithm.\footnote{It is well known that hitting set problem is NP-complete. So we employ the greedy algorithm which brings in the node with the largest degree in the network during each iteration.} 

\begin{figure}[ht]
\centering
\includegraphics[width=4.0in]{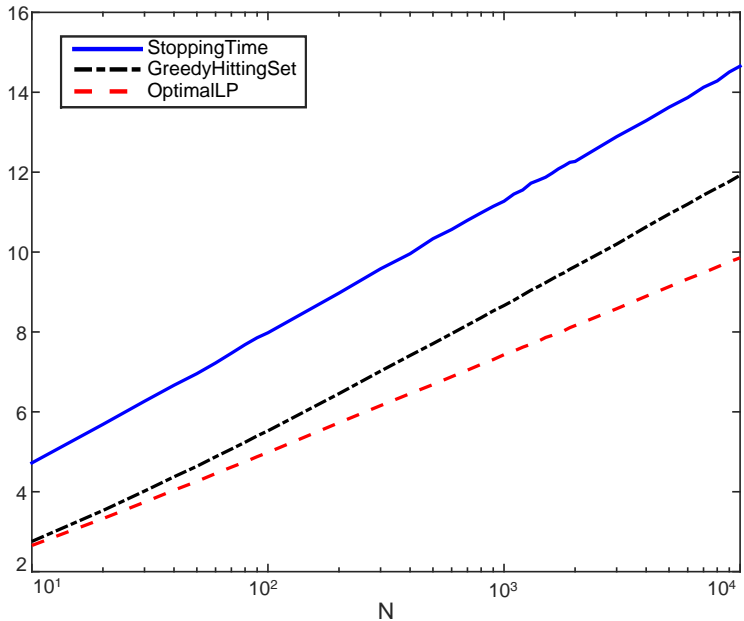}
\caption{Erdos-Renyi random graph with p=0.5}
\label{fig:stp}
\end{figure}

{Figure~\ref{fig:stp} shows the average result over 1000 samples for each $N$ when $p=0.5$. The numerical result verifies our theoretical result that $\sum_{j\in\cK}z^*_j$ is upper-bounded by a logarithmic function of $N$ asymptotically in Erdos-Renyi random graph. The reason why we are interested in the scaling order of $\sum_{j\in\cK}z^*_j$ is that the traditional UCB1 policy suffers from the curse of dimensionality when applied in the real world, such as recommendation systems with thousands of items. However, our policies show a regret of $O(\sum_{j\in\cK}z^*_j \log T)$, and $\sum_{j\in\cK}z^*_j$ is upper-bounded by a logarithmic function of the number of unknowns, which makes our policies scalable in some large networks.}

\section{Summary}\label{sec:conclude}
{In this work, we introduce an important structural form of feedback available in many multiarmed bandits using the bipartite network structure. We obtained an asymptotic (with respect to time) lower bound as a function of the network structure on the regret of any uniformly good policy. Further, we proposed two policies: 1) the $\epsilon_{t}$-greedy-LP policy, and 2) the UCB-LP policy, both of which are optimal in the sense that they achieve the asymptotic lower bound on the regret, up to a multiplicative constant that is independent of the network structure. These policies can have a better regret performance than existing policies for some important network structures. The $\epsilon_{t}$-greedy-LP policy is a network-aware any-time policy, but its exploration is oblivious to the average rewards of the suboptimal actions. On the other hand, UCB-LP considers both the network structure and the average rewards of actions. 

Important avenues of future work include the case of dynamic graphs -- what would be the lower bound and corresponding algorithms if the graph structure remains known but changes with time? Recently~\cite{aristide} presented a novel extension of Thompson sampling algorithm for the setting of immediate neighbor feedback studied in~\cite{mannor,bhagat,sigmetrics2014}. It would be interesting to see how to adapt Thompson sampling algorithm for the bipartite graph feedback structure introduced in our current work.}

\newpage

\appendix


In what follows, we give the proofs of all propositions stated in the earlier sections. These proofs make use of Lemmas~\ref{lem:chernoff},~\ref{lem:bernstein}, and~\ref{lem:prob}, and Proposition~\ref{prop:lailb} given in Section~\ref{sec:appendixSig}.
\section{Proof of Proposition~\ref{prop:lbSO}}\label{app:lbSO}

Let $\cU=\{j:\mu_{j}<\mu^{*}\}$ be the set of suboptimal actions. Also, let $\Delta_{j}=\mu^{*}-\mu_{j}.$  Also, $T_{j}(t)$ is the total number of times action $j$ is chosen up to time $t$ by policy $\bm{\phi}.$ 	Let $M_{i}(t)$ be the total number of observations corresponding to base-arm $i$ available at time $t.$
From Proposition~\ref{prop:lailb} given in the Appendix, we have, 
\begin{equation}\label{eq:lailb} \liminf_{t\rightarrow\infty}\frac{\mathbb{E}[M_{i}(t)]}{\log(t)}\ge\frac{1}{J_i(\theta_{i})},\  \forall i \in \cN. \end{equation}
	An observation is received for base-arm $i$ whenever any action in $S_{i}$ is chosen. 
	Hence, \begin{equation} \label{eq:sumT}M_{i}(t)=\sum_{j\in S_{i}}T_{j}(t).\end{equation} Now, from Equations~(\ref{eq:lailb}) and~(\ref{eq:sumT}), for each $i \in \cN,$
	\begin{equation} \label{eq:const}\liminf_{t\rightarrow\infty}\frac{\sum_{j\in S_{i}}\mathbb{E}[T_{j}(t)]}{\log(t)}\ge\frac{1}{J_i(\theta_{i})}.\end{equation} 
	Using~(\ref{eq:const}), we get the constraints of LP $P_{1}$. Further, we have from definition of regret that, $$\liminf_{t\rightarrow\infty}\frac{R_{\mu}(t)}{\log(t)}=\liminf_{t\rightarrow\infty}\sum_{j\in\cU}\Delta_{j}\frac{\mathbb{E}[T_{j}(t)]}{\log(t)}.$$ The above equation along with the constraints of the LP $P_{1}$ obtained from~(\ref{eq:const}) gives us the required lower bound on regret.

\section{Proof of Proposition~\ref{prop:lp2opt}}\label{app:lp2opt}

Here we consider a $E$-$R$ random graph with each entry of the matrix $E$ equals $1$ with probability $p$, where $0<p<1$.
Consider the following discrete stochastic process. $\chi_n$ are i.i.d., such that $\chi_n\subseteq[N]$ is sampled by the following steps: for each $i=1,2,..,N$, $i\in \chi_n$ with probability $p$. Let $q=1-p$. Then let $\tau$ be a stopping time defined as
\begin{equation}
\tau=\min\{n\geq1,\cup_{j=1}^{n}\chi_j=[N]\}
\end{equation}
The complement cdf of $\tau$ is the following.
\begin{equation}
P(\tau>n)=1-\left(1-q^n\right)^N
\end{equation}
\begin{enumerate}
\item Fix N. Given $0<p<1$, then $0<q<1$. Thus, $P(\tau=\infty)=0$
\item What is the upper bound of $E(\tau)$?
\begin{equation}
\left(1-q^n\right)^N>\exp(-\frac{q^nN}{1-q^n})~~(since~\ln(1-x)>-\frac{x}{1-x} ~~for ~~0<x<1)
\end{equation}
Thus, we have 
\begin{equation}
P(\tau>n)<1-\exp(-\frac{q^nN}{1-q^n})\leq\frac{q^nN}{1-q^n} ~~(since~1-e^x\leq-x)
\end{equation}
Then we can bound the expectation of $\tau$. 
\begin{equation}
E(\tau)=\sum_{n=1}^{\infty}P(\tau>n)<\sum_{n=1}^{\infty}q^n\frac{N}{1-q}=\frac{qN}{(1-q)^2}
\end{equation}
\item What is the upper bound of $P(\tau\leq n)$?
\begin{equation}
P(\tau\leq n)=\left(1-q^n\right)^N\leq \exp(-q^nN)
\end{equation}
\item Does $\tau$ converge as $N$ goes to infinity?\\
Given $\epsilon>0$, as $N$ goes to $\infty$,
\begin{align}
P(\tau\leq (1-\epsilon)\log_{1/q}N)&\leq\exp(-q^{(1-\epsilon)\log_{1/q}N}N)=\exp(-N^\epsilon)\rightarrow0\\
P(\tau>(1+\epsilon)\log_{1/q}N)&\leq\frac{q^{(1+\epsilon)\log_{1/q}N}N}{1-q}=\frac{1}{(1-q)N^\epsilon}\rightarrow0
\end{align}
Since $\epsilon$ is arbitrary, we can have
\begin{equation}
P(\tau=\log_{1/q}N)\rightarrow1 ~~as~ N\rightarrow\infty.
\end{equation}
That is to say $\tau$ converges to $\log_{1/q}N$ in probability.
\end{enumerate}

Suppose there are no useless base-arms in the network, i.e. $[K]$ is a hitting set. Then $\tau$ is less than $K$ with probability 1. Given this information, $\gamma(G)$ should be upper bounded by $\log_{1/q}N$ as $N$ goes to infinity.
	
\section{Proof of Proposition~\ref{prop:lbk}}\label{app:lbk}

Let $(z_{j}^{*})_{j\in\cK}$ be the optimal solution of LP $P_{2}.$ We will first prove the upper bound in Equation~\ref{eq:lb}. 
Using the optimal solution $(w_{j}^{*})_{j\in\cK}$ of LP $P_{1},$ we construct a feasible solution satisfying constraints in LP $P_{2}$ in the following way: For actions $j \in \cK,$ let $\displaystyle z_{j} = \left(\max_{i\in\cN} J_i(\theta_i) \right) w_{j}^{*}.$ Then $(z_{j})_{j\in\cK}$ satisfy constraints for all base-arms $i \in \cN$
because $w_{j}^{*}$ satisfy constraints of LP $P_{1}.$ \\
The feasible solution constructed in this way gives an upper bound on the optimal value of LP $P_{2}.$ Hence, 
\begin{align*}
\sum_{j \in \cK}z_{j}^{*} &\le \sum_{j \in \cU} z_{j} + |\cO|\\
&\le \sum_{j \in \cU}\left(\max_{i\in\cN} J_i(\theta_i)\right) w_{j}^{*} + |\cO| \\
&\le  \frac{\max_{i\in\cN} J_i(\theta_i)}{\min_{j\in\cU}\Delta_j}\sum_{j \in \cU}\Delta_{j}w_{j}^{*}+|\cO| \\
&\le \frac{\max_{i\in\cN} J_i(\theta_i)}{\min_{j\in\cU}\Delta_j}c_{\bm{\mu}}+|\cO|
\end{align*}
For the lower bound, any feasible solution of $P_{2},$ in particular $\mathbf{z}^{*},$ can be used to construct a feasible solution of $P_{1}.$ For actions $j \in \cK,$ let $\displaystyle w_{j} = \frac{z_{j}^{*}}{\min_{i\in\cN} J_i(\theta_i)}.$ Then $(w_{j})_{j\in\cK}$ satisfies the constraints of LP $P_{1}$ and hence gives an upper bound on its optimal value. Therefore, we have 
\begin{align*}
c_{\bm{\mu}} &= \sum_{j \in \cU}\Delta_{j} w_{j}^{*} ,\\ &\le \sum_{j \in \cK}\frac{\Delta_j z_{j}^{*}}{\min_{i\in\cN} J_i(\theta_i)} \\ &\le \sum_{j \in \cK}\frac{\max_{a\in\cU}\Delta_a z_{j}^{*}}{\min_{i\in\cN} J_i(\theta_i)} 
\end{align*}
which gives us the required lower bound. 

\section{Proof of Proposition~\ref{prop:epsgreedySO}}\label{app:epsgreedySO}
Since $\mathbf{z}^{*}$ satisfies the constraints in LP $P_{2},$ there is sufficient exploration within each suboptimal action's neighborhood. The proof is then a combination of this fact and the proof of Theorem~3 in ~\cite{AuerFinite}. Let $\bar{f}_j(t)$ be the random variable denoting the sample mean of all observations available for action $j$ at time $t.$ Let $\bar{f}^*(t)$ be the random variable denoting the sample mean of all observations available for an optimal action at time $t.$ Fix a suboptimal action $j.$ 
For some $\alpha>1,$ define $m_i$ for each base-arm $i$ as follows, $$m_i=\frac{1}{\alpha}\frac{\sum_{j\in S_i}z_j^*}{\sum_{j\in\cK}z_j^*}\sum_{m=1}^{t}\epsilon(m)$$\\
Let $\phi(t)$ be the action chosen by $\epsilon_t$-greedy-LP policy at time $t.$ The event $\{\phi(t)=j\}$ implies that either sampling a random action $j$ for exploration or playing the best observed action $j$ for exploitation. Then, 

$$\bP[\phi(t)=j]\le \frac{\epsilon(t)z_j^*}{\sum_{a\in\cK}z_a^*}+(1-\epsilon(t))\bP[\bar{f}_j(t)\ge\bar{f}^*(t)]$$
The event $\{\bar{f}_j(t)\ge\bar{f}^*(t)\}$ implies that either $\{\bar{f}_j(t)\ge\mu_j+\frac{\Delta_j}{2}\}$ or $\{\bar{f}^*(t)\le\mu^*-\frac{\Delta_j}{2}\}$ since $\mu_j+\frac{\Delta_j}{2}=\mu^*-\frac{\Delta_j}{2}$. We also have that,
\begin{align*}\bP[\bar{f}_j(t)\ge\bar{f}^*(t)]&\le \bP\left[\bar{f}_j(t)\ge\mu_j+\frac{\Delta_j}{2}\right] +\bP\left[\bar{f}^*(t)\le\mu^*-\frac{\Delta_j}{2}\right].\end{align*}
The analysis of both the terms in the right hand side of the above expression is similar. Let $O_i^{R}(t)$ be the total number of observations available for base-arm $i$ from the exploration iterations of the policy up to time $t.$ Let $O_i(t)$ be the total number of observations available for base-arm $i$ up to time $t.$ By concentration inequalities, the probability that the empirical mean deviate from the expectation can be bounded given the number of observations. The number of observations for action $j$ is lower-bounded by the number of observations from the exploration iterations. Hence, we have,  
\begin{align*}
\bP\left[\bar{f}_j(t)\ge\mu_j+\frac{\Delta_j}{2}\right]&=\sum_{m=1}^t\bP\left[\min_{i\in\mathcal{K}_j}O_i(t) = m;\bar{f}_j(t)\ge\mu_j+\frac{\Delta_j}{2}\right]\\
&=\sum_{m=1}^t\bP\left[\bar{f_j}(t)\geq\mu_j+\frac{\Delta_j}{2}|\min_{i\in\mathcal{K}_j}O_i(t) = m\right]\bP\left[\min_{i\in\mathcal{K}_j}O_i(t) = m\right]\\
&\le\sum_{m=1}^{t}\bP\left[\min_{i\in\mathcal{K}_j}O_i(t) = m\right]e^{\frac{-\Delta_j^2 m}{2}}\\
&\mbox{(follows from Chernoff-Hoeffding bound in Lemma~\ref{lem:chernoff})}\\
&\le\sum_{m=1}^{t}\bP\left[\min_{i\in\mathcal{K}_j}O_i^R(t) \leq m\right]e^{\frac{-\Delta_j^2 m}{2}}\\
&\leq \sum_{m=1}^{\lfloor m_0 \rfloor}\bP\left[\min_{i\in\mathcal{K}_j}O_i^R(t) \leq m\right] + \sum_{m=\lfloor m_0 \rfloor +1}^{t}e^{\frac{-\Delta_j^2m}{2}}\\
&\leq m_0\bP\left[\min_{i\in\mathcal{K}_j}O_i^R(t) \leq m_0\right] + \frac{2}{\Delta_j^2}e^{\frac{-\Delta_j^2m_0}{2}}\\
&\qquad \left(\mbox{since } \sum_{m+1}^{\infty}e^{-k u} \leq \frac{1}{k}e^{-km} \right) \\
&\leq \sum_{i\in \mathcal{K}_j}m_0\bP\left[O_i^R(t) \leq m_0\right] + \frac{2}{\Delta_j^2}e^{\frac{-\Delta_j^2m_0}{2}},
\end{align*}
where $m_0=\min_{i\in \mathcal{N}}m_i$.

Recall that $O_i^R(t)$ is the total number of observations for base-arm $i$ from exploration. Now, we derive the bounds for the expectation and variance of $O_i^R(t)$ in order to use Bernstein's inequality. 
\begin{align*}
\mathbb{E}\left[O_i^R(t)\right]&=\sum_{m=1}^{t}\epsilon(m)\sum_{j\in S_i}\frac{z_j^*}{\sum_{j\in \mathcal{K}}z_j^*}\\
&=\frac{\sum_{j\in S_i}z_j^*}{\sum_{j\in \mathcal{K}}z_j^*}\sum_{m=1}^{t}\epsilon(m)=\alpha m_i\\
&\geq \alpha m_0
\end{align*}
\begin{align*}
var\left[O_i^R(t)\right]&=\sum_{m=1}^{t}\left(\epsilon(m)\sum_{j\in S_i}\frac{z_j^*}{\sum_{j\in \mathcal{K}}z_j^*}\right)\left(1-\epsilon(m)\sum_{j\in S_i}\frac{z_j^*}{\sum_{j\in \mathcal{K}}z_j^*}\right)\\
&\leq \sum_{m=1}^{t}\epsilon(m)\sum_{j\in S_i}\frac{z_j^*}{\sum_{j\in \mathcal{K}}z_j^*}\\
&=\mathbb{E}[O_i^R(t)]=\alpha m_i
\end{align*}

Now, using Bernstein's inequality given in Lemma~\ref{lem:bernstein}, we have 
\begin{align*}
\bP\left[O_i^R(t)\leq m_0\right] &= \bP\left[O_i^R(t)\leq\mathbb{E}[O_i^R(t)]+m_0-\alpha m_i\right]\\
&\leq \bP\left[O_i^R(t)\leq\mathbb{E}[O_i^R(t)]+m_i-\alpha m_i\right]\\
&\leq \exp\left(-\frac{(\alpha -1)^2m_i^2}{2\alpha m_i + \frac{2}{3}(\alpha - 1)m_i}\right)\\
&=\exp \left(-\frac{3(\alpha-1)^2}{8\alpha-2}m_i\right)=\exp(-rm_i)\\
\end{align*}
where $r=\frac{3(\alpha-1)^{2}}{8\alpha-2}.$
Now, we will obtain upper and lower bounds on $m_i$ by plugging in the definition of $\epsilon(m)$. For the upper bound, for any $t>t'=\frac{c\sum_{i\in\cK}z_i^*}{d^2},$
\begin{align*}
m_i&=\frac{\sum_{j\in S_i}z_j^*}{\alpha \sum_{j\in \mathcal{K}}z_j^*}\sum_{m=1}^{t}\epsilon(m)\\
&=\frac{\sum_{j\in S_i}z_j^*}{\alpha \sum_{j\in \mathcal{K}}z_j^*}t^\prime + \frac{\sum_{j\in S_i}z_j^*}{\alpha \sum_{j\in \mathcal{K}}z_j^*}\sum_{m=t^\prime+1}^{t}\frac{c\sum_{i\in\mathcal{K}}z_i^*}{d^2m}\\
&\leq \frac{c\delta}{\alpha d^2}\left(1+\sum_{m=t^\prime+1}^{t}\frac{1}{m}\right)\\
&\leq \frac{c\delta}{\alpha d^2}\log\left(\frac{e^2t}{t^\prime}\right).
\end{align*}
where $\delta=\max_{i\in\mathcal{N}}|S_i|$, denoting the maximum degree of the supports in the network. In the above, $\sum_{j\in S_i}z_j^* \le \delta$ because $z_j^*\le 1,$ which is due to the fact that $\left(z_j^*\right)_{j\in\cK}$ is the optimal solution of LP $P_2.$  Next, for the lower bound, we use the fact that $\sum_{j\in S_i}z_j^* \ge 1$ for all $i$ because $\left(z_j^*\right)_{j\in\cK}$ satisfies the constraints of LP $P_2.$ Thus
\begin{align*}
m_i&\geq\frac{\sum_{j\in S_i}z_j^*}{\alpha \sum_{j\in \mathcal{K}}z_j^*}\sum_{m=t^\prime+1}^{t}\frac{c\sum_{i\in\mathcal{K}}z_i^*}{d^2m}\\
&\geq \frac{c}{\alpha d^2}\sum_{m=t^\prime+1}^{t}\frac{1}{m}\\
&\geq \frac{c}{\alpha d^2}\log\left(\frac{t}{et^\prime}\right).
\end{align*}
Let $\lambda=\max_{j\in \mathcal{K}}|\mathcal{K}_j|$.
Hence, combining the inequalities above,
\begin{align*}
\bP\left[\bar{f_j}(t)\geq \mu_j+\frac{\Delta_j}{2}\right]&\leq \sum_{i\in \mathcal{K}_j}m_0\bP\left[O_i^R(t)\leq m_0\right]+\frac{2}{\Delta_j^2}e^{-\frac{\Delta_j^2m_0}{2}}\\
&\leq  \sum_{i\in \mathcal{K}_j}m_0 \left(\frac{et^\prime}{t}\right)^{cr/\alpha d^2} +\frac{2}{\Delta_j^2}e^{-\frac{\Delta_j^2m_0}{2}}\\
&\leq \lambda \frac{c\delta}{\alpha d^2}\left(\log \left(\frac{e^2t}{t^\prime}\right)\right)\left(\frac{et^\prime}{t}\right)^{cr/\alpha d^2} +\frac{2}{\Delta_j^2}\left(\frac{et^\prime}{t}\right)^{\frac{c\Delta_j^2}{2\alpha d^2}}
\end{align*}
Now, similarly for the optimal action, we have, for all $t>t'$ 
\begin{align*}
\bP\left[\bar{f^*}(t)\leq\mu^*-\frac{\Delta_j}{2}\right]\leq\frac{\lambda c \delta}{\alpha d^2}\left(\frac{et^\prime}{t}\right)^{cr/\alpha d^2} \log \left(\frac{e^2t}{t^\prime}\right)+\frac{2}{\Delta_j^2}\left(\frac{et^\prime}{t}\right)^{\frac{c\Delta_j^2}{2\alpha d^2}}.
\end{align*}
Combining everything, we have for any suboptimal action $j,$ for all $t>t'$
\begin{align*}
\bP[\phi(t)=j]&\leq \frac{\epsilon(t)z_j^*}{\sum_{a\in \mathcal{K}}z_a^*} +\left(1-\epsilon(t)\right)P[\bar{f_j}(t)\geq\bar{f^*}(t)]\\
&\leq \frac{cz_j^*}{d^2t}+P[\bar{f_j}(t)\geq \bar{f^*}(t)]\\
&\leq \frac{cz_j^*}{d^2t}+\frac{2\lambda c \delta}{\alpha d^2}\left(\frac{et^\prime}{t}\right)^{cr/\alpha d^2} \log \left(\frac{e^2t}{t^\prime}\right)+\frac{4}{\Delta_j^2}\left(\frac{et^\prime}{t}\right)^{\frac{c\Delta_j^2}{2\alpha d^2}}
\end{align*}

\section{Proof of Proposition~\ref{prop:ucblp}}\label{app:ucblpSO}
The proof technique is similar to that in~\citet{AuerRevised}. We will analyze the regret by conditioning on two disjoint events. The first event is that each suboptimal action $a$ is eliminated by an optimal action on or before the first round $m$ such that $\tilde{\Delta}_{m}<\Delta_{a}/2.$ This happens with high probability and leads to logarithmic regret. The compliment of the first event yields linear regret in time but occurs with probability proportional to $1/T.$ The main difference from the proof in~\citet{AuerRevised} is that on the first event, the number of times we choose each action $j$ is proportional to $z_{j}^{*}\log(T)$ in the exploration iterations (i.e., when $|B_m|>1$) of the policy. This gives us the required upper bound in terms of optimal solution $\mathbf{z}^{*}$ of LP $P_{2}.$

Let $*$ denote any optimal action. Let $m^*$ denote the round in which the last optimal action $*$ is eliminated. For each suboptimal action $j$, define round $m_j:=\min\{m : \tilde{\Delta}_{m}<\frac{\Delta_j}{2}\}.$ For an optimal action $j,$ $m_{j}=\infty$ by convention.
Then, by the definition of $m_j,$ for all rounds $m<m_j,$ $\Delta_j \le 2\tDelta_m,$ and \begin{equation}\label{eq:mirel}\frac{2}{\Delta_j}< 2^{m_j}=\frac{1}{\tDelta_{m_j}}\le\frac{4}{\Delta_j}<\frac{1}{\tDelta_{m_j+1}}=2^{m_j+1}.\end{equation}
From Lemma~\ref{lem:prob} in the Appendix, the probability that action $j$ is not eliminated in round $m_j$ by $*$ is at most $\frac{2}{T\tDelta_{m_j}^2}.$ \\ 
Let $I(t)$ be the action chosen at time $t$ by the UCB-LP policy. \\ 
Let $E_{m^*}$ be the event that all suboptimal actions with $m_j\le m^*$ are eliminated by $*$ on or before their respective $m_j$. Then, the complement of $E_{m^*},$ denoted as $E_{m^*}^c,$ is the event that there exists some suboptimal action $j$ with $m_j\le m^*,$ which is not eliminated by round $m_j.$ Let $E_j^c$ be the event that action $j$ is not eliminated by round $m_j$ by $*.$ Let $m_f=\lfloor\frac{1}{2}\log_2\frac{T}{e}\rfloor$ and $I(t)$ denote the action chosen at time $t$ by the policy.  Recall that regret is denoted by $R_{\bm{\mu}}(T).$ Let $\bP[m^*=m]$ be denoted by $p_{m}.$ Hence, $\sum_{m=0}^{m_f}p_{m}=1.$
\begin{align*}
\bE\left[R_{\bm{\mu}}(T)\right] & = \sum_{m=0}^{m_f} \bE\left[R_{\bm{\mu}}(T)|\{m^*=m\}\right]\bP[m^*=m] \\& =\sum_{m=0}^{m_f} \sum_{t=1}^T \sum_{j\in\cU}\Delta_j \bP\left[I(t)=j|\{m^*=m\}\right]p_{m}\\
&=\sum_{m=0}^{m_f} \sum_{t=1}^T \sum_{j\in \cU}\Delta_j \bP\left[\{I(t)=j\}\cap E_{m^*} |\{m^*=m\}\right]p_{m}\\& \ +\sum_{m=0}^{m_f} \sum_{t=1}^T \sum_{j\in \cU}\Delta_j\bP\left[\{I(t)=j\}\cap E_{m^*}^c |\{m^*=m\}\right]p_{m}\\
&= (\romannumeral 1) + (\romannumeral 2)
\end{align*}
Next we will show that term $(\romannumeral 1)$ leads to logarithmic regret while term $(\romannumeral 2)$ leads to a constant regret with time.\\
First, consider the term $(\romannumeral 2)$ of the regret expression. 
For each $j\in\cU,$ we have,
\begin{align*}
\sum_{m=0}^{m_f} \sum_{t=1}^T & \bP\left[\{I(t)=j\}\cap E_{m^*}^c |\{m^*=m\}\right]\bP[m^*=m]\\
&\le\sum_{m=0}^{m_f} \sum_{t=1}^T  \bP\left[\{I(t)=j\}\cap\left( \cup_{a\in\cU:m_a \le m^*} E_a^c\right) |\{m^*=m\}\right]p_{m}\\
&\le \sum_{m=0}^{m_f} \sum_{t=1}^T  \Big(\bP\left[\{I(t)=j\}|\left( \cup_{a\in\cU:m_a \le m^*} E_a^c\right), \{m^*=m\}\right]\\&\qquad\qquad\qquad\qquad\bP\left[ \cup_{a\in\cU:m_a \le m^*} E_a^c|\{m^*=m\}\right]p_{m}\Big)\\
&\le T\bP\left[ \cup_{a\in\cU} E_a^c|\{m^*=m_{f}\}\right]\sum_{m=0}^{m_f}p_{m}
\\&\le T\sum_{a\in\cU}\frac{2}{T\tDelta_{m_a}^2}, \\ &\quad \left(\mbox{using~Lemma~\ref{lem:prob}, } \bP\left[E_a^c|\{m^*=m_{f}\}\right] \le\frac{2}{T\tDelta_{m_a}^2}\right)\\&\le \sum_{a\in\cU}\frac{32}{\Delta_{a}^2},
\end{align*}
where the last inequality follows from Equation~(\ref{eq:mirel}).
Hence, the term $(\romannumeral 2)$ of regret is 
\begin{align}\label{eq:2}\sum_{m=0}^{m_f} \sum_{t=1}^T \sum_{j\in \cU}\Delta_j &\bP\left[\{I(t)=j\}\cap E_{m^*}^c |\{m^*=m\}\right]p_{m}\nonumber\\&\qquad \le  \sum_{j\in \cU}\Delta_j \sum_{a\in\cU}\frac{32}{\Delta_{a}^2}=O(K^{2}).\end{align}
\\ \\
Next, we consider the term $(\romannumeral 1)$. Recall that, in this term, we consider the case that all suboptimal actions $j$ with $m_{j}\le m^{*}$ are eliminated by $*$ on or before $m_j.$  
\begin{align*}
(\romannumeral 1)&=\sum_{m=0}^{m_f}  \sum_{t=1}^T  \sum_{j\in \cU}\Delta_j \bP\left[\{I(t)=j\}\cap E_{m^*} |\{m^*=m\}\right]p_m\\
&=\sum_{m=0}^{m_f}\bE\left[R_{\bm{\mu}}(T)|\{m^*=m\},E_{m^*}\right]\bP[E_{m^*}|\{m^*=m\}]p_{m}\\
&\le \sum_{m=0}^{m_f} \Big(\bE\left[\mbox{Regret from $\{j: m_j\le m^*\}$}|\{m^*=m\},E_{m^*}\right]\\&\ \quad+\bE\left[\mbox{Regret from $\{j: m_j > m^*\}$}|\{m^*=m\},E_{m^*}\right]\Big)p_{m}\\
&\le \sum_{m=0}^{m_f} \Big(\bE\left[\mbox{Regret from $\{j: m_j\le m_{f}\}$}|\{m^*=m_{f}\},E_{m_{f}}\right]\\&\ \quad+\bE\left[\mbox{Regret from $\{j: m_j > m^*\}$}|\{m^*=m\},E_{m^*}\right]\Big)p_{m}\\ 
&\le \bE\left[R_{\bm{\mu}}(T)|\{m^{*}=m_f\},E_{m_f}\right]\sum_{m=0}^{m_f}p_{m} \\& +\sum_{m=0}^{m_{f}}\bE\left[\mbox{Regret from $\{j:m_j > m^*\}$}|\{m^*=m\},E_{m^*}\right]p_{m}\\
&=(\romannumeral 1a) + (\romannumeral 1b)
\end{align*}
Once again, we will consider the above two terms separately. For the term $(\romannumeral 1a),$ under the event $E_{m_f},$ each suboptimal action $j$ is eliminated by $*$ by round $m_{j}.$ Define round $\bar{m}$ and the set $B$ as follows: 
\begin{align*}\bar{m}&=\min\{m: \sum_{j\in \cK}z_{j}^{*} > \sum_{a:m_{a}>m}2^{-m+1}\},\\ B&=\{j \in \cU:m_{j}>\bar{m}\}.\end{align*} 
After round $\bar{m}$, Algorithm~\ref{alg:ucblp} chooses only those actions with $m_{j} > \bar{m}.$ Also, by the definition of the {\bf Reset} phase of Algorithm~\ref{alg:ucblp}, we have that any suboptimal action $j \notin B$ is chosen (i.e. appears in the set $A_{m}$ at round $m$) only until it is not in $A_m$ or until $\bar{m},$ whichever happens first. Define $\displaystyle n_{j} = \min\{\bar{m}, \max_{a:j\in G_a}\{m_{a}\}\}$ for each suboptimal action $j$, where $G_a=\bigcup_{i\in \cK_a} S_i$ for action $a$. Then any suboptimal action $j \notin B$ is chosen for at most $n_{j}$ rounds. 
\begin{align}\label{eq:1a}
(\romannumeral 1a)&=\bE\left[R_{\bm{\mu}}(T)|\{m^{*}=m_f\},E_{m_f}\right] \nonumber \\ &\le \sum_{j\in \cU\setminus B} \Delta_{j} z_{j}^{*}\frac{2\log(T\tDelta_{n_{j}}^{2})}{\tDelta_{n_{j}}^{2}} + \sum_{j\in B} \Delta_{j}\frac{2\log(T\tDelta_{m_{j}}^{2})}{\tDelta_{m_{j}}^{2}}\nonumber \\
&\le \sum_{j\in \cU\setminus B} \Delta_{j} z_{j}^{*}\frac{32\log(T\hat{\Delta}_{j}^{2})}{\hat{\Delta}_{j}^{2}} + \sum_{j\in B} \Delta_{j}\frac{32\log(T\Delta_{j}^{2})}{\Delta_{j}^{2}}, 
\end{align}
where $\hat{\Delta}_{j}=\max\{2^{-\bar{m}+2},\min_{a:j\in G_a}\{\Delta_{a}\}\}$ and $(z_{j}^{*})$ is the solution of LP $P_{2}.$\\
Finally, we consider the term $(\romannumeral 1b).$ Note that $T_j(m)\geq n(m), \forall j \in B_m, \forall m.$ An optimal action $*$ is not eliminated in round $m^{*}$ if~(\ref{eq:sub1}) holds for $m=m^*$. Hence, using~(\ref{eq:cbl}) and~(\ref{eq:cbu}), the probability $p_{m}$ that $*$ is eliminated by a suboptimal action in any round $m^{*}$ is at most $\frac{2}{T\tDelta_{m^{*}}^{2}}.$ Hence, term $(\romannumeral 1b)$ is given as:
\begin{align}
\sum_{m=0}^{m_{f}}&\bE\left[\mbox{Regret from $\{j:m_j > m^*\}$}|\{m^*=m\},E_{m^*}\right]p_{m}\nonumber \\
&\le \sum_{m=0}^{m_{f}} \sum_{j\in\cU:m_{j}\ge m}\frac{2}{T\tDelta_{m}^{2}}.T\max_{a\in\cU}\Delta_{a}\nonumber\\
&\le \max_{a\in\cU}\Delta_{a}\sum_{m=0}^{m_{f}} \sum_{j\in\cU:m_{j}\ge m}\frac{2}{\tDelta_{m}^{2}}\nonumber \\ \label{eq:1b}
&\le  \sum_{j\in\cU}\sum_{m=0}^{m_{j}}\frac{2}{\tDelta_{m}^{2}}\nonumber\\
&\le \sum_{j\in\cU}2^{2m_{j}+2}\le \sum_{j\in\cU}\frac{64}{\Delta_{j}^{2}}=O(K).
\end{align}

\noindent Now we get the result~(\ref{eq:ucblp}) by combining the bounds in~(\ref{eq:2}), (\ref{eq:1a}), and~(\ref{eq:1b}). \\
Further, the definition of set $B$ ensures that we have $$\sum_{j\in B}\Delta_{j}\le \sum_{j\in\cK}z_{j}^{*}.$$ Also, using the Assumption~\ref{assum:bounded}, $\frac{32\Delta_{j}\log(T\hat{\Delta}_{j}^{2})}{\hat{\Delta}_{j}^{2}}, \frac{32\log(T\Delta_{j}^{2})}{\Delta_{j}^{2}}$ are bounded by $C\log(T),$ where $C=\frac{32}{\min_{j\in\cU}\Delta_j^2}$, is a constant independent of  network structure. When one checks the feasibility of $C$, note that $\hat{\Delta}_j\geq \min_{a:j\in G_a}\Delta_a$ by definition and $\Delta_j\leq 1$ for any $j$ since the rewards are bounded by $1$. Hence,~(\ref{eq:1a}) can be bounded as:  
\begin{align}
\sum_{j\in \cU\setminus B}& \Delta_{j} z_{j}^{*}\frac{32\log(T\hat{\Delta}_{j}^{2})}{\hat{\Delta}_{j}^{2}} + \sum_{j\in B} \Delta_{j}\frac{32\log(T\Delta_{j}^{2})}{\Delta_{j}^{2}} \nonumber \\ &\qquad \le \sum_{j\in \cU\setminus B} z_{j}^{*}C\log(T) + \sum_{j\in B} \Delta_{j}C\log(T) \nonumber \\ 
&\qquad \le \sum_{j\in \cU\setminus B} z_{j}^{*}C\log(T) + \sum_{j\in B} 2^{-\bar{m}+1}C\log(T) \nonumber \\ 
&\qquad\le 2\sum_{j\in \cK} z_{j}^{*}C\log(T). \label{eq:temp1}
\end{align}
Hence, we get~(\ref{eq:ucblp2}) from~(\ref{eq:temp1}),~(\ref{eq:2}), and~(\ref{eq:1b}).

\section{Supplementary Material} \label{sec:appendixSig}
$S_{n}=\frac{1}{n}\sum_{j=1}^{n}X_{j}$ denotes the sample mean of the random variables $X_{1},\ldots,X_{n}.$
The first two lemmas below state the Chernoff-Hoeffding inequality and Bernstein's inequality.
\begin{lemma}\label{lem:chernoff}
Let $X_{1},\ldots,X_{n}$ be a sequence of random variables with support $[0,1]$ and $\mathbb{E}[X_{t}]=\mu$ for all $t\le n.$ Let $S_{n}=\frac{1}{n}\sum_{j=1}^{n}X_{j}.$ Then, for all $\epsilon>0,$ we have,
\begin{align*}
\mathbb{P}[S_{n}\ge\mu+\epsilon]&\le e^{-2n\epsilon^{2}}\\
\mathbb{P}[S_{n}\le\mu-\epsilon]&\le e^{-2n\epsilon^{2}}.
\end{align*}
\end{lemma}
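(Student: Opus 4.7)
The plan is to apply the classical Chernoff--Cram\'er method: exponentiate, apply Markov's inequality, control the moment generating function of each centered summand via Hoeffding's lemma, and optimize the resulting free parameter. I will assume the $X_j$ are independent, which is consistent with the i.i.d.\ standing assumption on base-arm rewards in Section~3 and matches the usage of this lemma in the proof of Proposition~\ref{prop:epsgreedySO}.

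\textbf{Step 1 (Chernoff bound).} For any $\lambda > 0$, Markov's inequality together with independence gives
\[
\mathbb{P}[S_n \ge \mu + \epsilon] \;=\; \mathbb{P}\!\left[e^{\lambda \sum_{j=1}^n (X_j - \mu)} \ge e^{\lambda n \epsilon}\right] \;\le\; e^{-\lambda n \epsilon}\prod_{j=1}^n \mathbb{E}\!\left[e^{\lambda(X_j - \mu)}\right].
\]

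\textbf{Step 2 (Bounding each MGF).} Each centered summand $Y_j := X_j - \mu$ lies in the interval $[-\mu,\,1-\mu]$, which has length exactly $1$. By Hoeffding's lemma, any zero-mean random variable supported in an interval of length $L$ satisfies $\mathbb{E}[e^{\lambda Y}] \le e^{\lambda^2 L^2 / 8}$, so $\mathbb{E}[e^{\lambda Y_j}] \le e^{\lambda^2/8}$ for each $j$. Substituting back yields
\[
\mathbb{P}[S_n \ge \mu + \epsilon] \;\le\; \exp\!\bigl(-\lambda n \epsilon + n\lambda^2/8\bigr).
\]

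\textbf{Step 3 (Optimize $\lambda$).} The right-hand side is minimized at $\lambda = 4\epsilon$, at which value the exponent equals $-2n\epsilon^2$, establishing the upper-tail bound. The lower-tail bound follows by the same argument applied either to $\{-X_j\}$ or, equivalently, to the i.i.d.\ sequence $\{1 - X_j\} \subset [0,1]$ with mean $1-\mu$; because the length of the support interval (and hence the Hoeffding constant) is invariant under this reflection, the exponent is unchanged.

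\textbf{Main obstacle.} The only nontrivial ingredient is Hoeffding's lemma itself, and that is where I would focus care in a fully self-contained write-up. The standard route is to work with $\psi(\lambda) := \log \mathbb{E}[e^{\lambda Y}]$, observe that $\psi(0) = \psi'(0) = 0$, and recognize $\psi''(\lambda)$ as the variance of $Y$ under the exponentially tilted measure; since that tilted measure is still supported in an interval of length $L$, Popoviciu's inequality gives $\psi''(\lambda) \le L^2/4$ uniformly in $\lambda$, and two integrations produce $\psi(\lambda) \le \lambda^2 L^2 / 8$. Every other step in the proof is a routine computation.
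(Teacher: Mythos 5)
Your proof is correct: the paper states Lemma~\ref{lem:chernoff} as the classical Chernoff--Hoeffding inequality and provides no proof of its own, and your derivation (Markov's inequality applied to the exponentiated centered sum, Hoeffding's lemma with interval length $1$ giving $\mathbb{E}[e^{\lambda(X_j-\mu)}]\le e^{\lambda^2/8}$, optimization at $\lambda=4\epsilon$ yielding the exponent $-2n\epsilon^2$, and reflection $X_j\mapsto 1-X_j$ for the lower tail) is exactly the standard argument establishing it. You were also right to flag the one gap in the statement itself: independence of $X_1,\ldots,X_n$ is not written in the lemma but is required for the product step in your Step~1, and it holds in every place the paper invokes the lemma, where the observations of each base-arm are i.i.d.
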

\begin{lemma}\label{lem:bernstein}
Let $X_{1},\ldots,X_{n}$ be a sequence of random variables with support $[0,1]$ and $\sum_{k=1}^{t}$ $var[X_{k}|X_{1},\ldots,X_{k-1}]\le\sigma^{2}$  for all $t\le n.$ Let $S_{n}=\sum_{j=1}^{n}X_{j}.$ Then, for all $\epsilon>0,$ we have,
\begin{align*}
\mathbb{P}[S_{n}\ge\mathbb{E}[S_{n}]+\epsilon]&\le \exp\left\{-\frac{\epsilon^{2}}{2\sigma^{2}+\frac{2}{3}\epsilon}\right\}\\
\mathbb{P}[S_{n}\le\mathbb{E}[S_{n}]-\epsilon]&\le \exp\left\{-\frac{\epsilon^{2}}{2\sigma^{2}+\frac{2}{3}\epsilon}\right\}.
\end{align*}
\end{lemma}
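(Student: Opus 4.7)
The plan is to prove Bernstein's inequality via the standard Chernoff (exponential Markov) method, combined with an iterated conditional expectation argument to handle the martingale-style conditional variance assumption, and a final optimization over the exponential parameter.

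First I would apply Markov's inequality to $e^{\lambda(S_n - \mathbb{E}[S_n])}$ for an arbitrary $\lambda > 0$, giving
\[
\mathbb{P}[S_n \ge \mathbb{E}[S_n] + \epsilon] \le e^{-\lambda \epsilon}\, \mathbb{E}\!\left[e^{\lambda(S_n - \mathbb{E}[S_n])}\right].
\]
The core analytic step is to control the moment generating function on the right-hand side. For this I would write $S_n - \mathbb{E}[S_n]$ as a sum of (conditional) centerings $Y_k = X_k - \mathbb{E}[X_k\mid X_1,\ldots,X_{k-1}]$, use the tower property, and at each step invoke the standard lemma that for any random variable $Y$ with $|Y|\le 1$, $\mathbb{E}[Y]=0$, and variance $v$, one has $\mathbb{E}[e^{\lambda Y}] \le \exp\!\bigl(v(e^\lambda - 1 - \lambda)\bigr)$. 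This lemma follows from the Taylor expansion $e^{\lambda y} = 1 + \lambda y + \sum_{m\ge 2} (\lambda y)^m / m!$ and the bound $y^m \le y^2$ when $|y|\le 1$ and $m\ge 2$.

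Chaining these conditional MGF bounds via iterated expectation and using the hypothesis $\sum_k \mathrm{var}[X_k\mid X_1,\ldots,X_{k-1}] \le \sigma^2$ yields
\[
\mathbb{E}\!\left[e^{\lambda(S_n - \mathbb{E}[S_n])}\right] \le \exp\!\bigl(\sigma^2 (e^\lambda - 1 - \lambda)\bigr).
\]
Plugging this back in gives the Bennett-style bound
\[
\mathbb{P}[S_n \ge \mathbb{E}[S_n] + \epsilon] \le \exp\!\bigl(\sigma^2(e^\lambda - 1 - \lambda) - \lambda \epsilon\bigr).
\]
Optimizing in $\lambda$ by choosing $\lambda = \log(1 + \epsilon/\sigma^2)$ produces Bennett's inequality; I would then use the elementary inequality $(1+u)\log(1+u) - u \ge \tfrac{u^2/2}{1 + u/3}$ for $u\ge 0$ (with $u = \epsilon/\sigma^2$) to loosen the bound into the Bernstein form $\exp\!\bigl(-\epsilon^2 / (2\sigma^2 + \tfrac{2}{3}\epsilon)\bigr)$. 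The lower-tail estimate follows by applying the identical argument to the sequence $-X_k$ (shifted to $[0,1]$), since negating preserves both the support-width and the conditional variances.

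The main obstacle, albeit a mild one, is handling the martingale aspect cleanly: the centerings $X_k - \mathbb{E}[X_k\mid X_1,\ldots,X_{k-1}]$ are what interact well with the per-step MGF bound, but the statement centers by the unconditional mean $\mathbb{E}[S_n] = \sum_k \mathbb{E}[X_k]$. The routine fix is to note that the tower property still lets the product of conditional MGFs telescope correctly against $\mathbb{E}[S_n]$, since the deterministic constants $\mathbb{E}[X_k]$ can be absorbed into the exponential at each stage. After that the tuning of $\lambda$ and the Bennett-to-Bernstein convexity inequality are entirely routine.
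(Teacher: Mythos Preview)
The paper does not prove this lemma; it simply states it as a standard result (alongside the Chernoff--Hoeffding bound) and cites it as ``Bernstein's inequality.'' So there is no paper proof to compare against.

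Your sketch is the standard Chernoff/Bennett route and is correct in spirit. One caveat worth flagging: the subtlety you identify at the end is real, not merely cosmetic. If the $X_k$ are genuinely dependent, then the conditional means $\mathbb{E}[X_k\mid X_1,\ldots,X_{k-1}]$ are random, and you cannot simply ``absorb'' them to recover centering by the deterministic $\mathbb{E}[S_n]$; what the MGF telescoping actually controls is $S_n - \sum_k \mathbb{E}[X_k\mid X_1,\ldots,X_{k-1}]$, which is the Freedman/martingale version. The lemma as stated in the paper is the form quoted in \cite{AuerFinite}, and in the paper's only application (bounding $O_i^R(t)$, a sum of independent indicator variables) the conditional and unconditional means coincide, so the issue never bites. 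If you want a fully rigorous proof matching the stated hypotheses, either add an independence assumption or prove the Freedman form and note that it specializes to the stated lemma when the $X_k$ are independent.
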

The next lemma is used in the proof of Proposition~\ref{prop:ucblp}.
\begin{lemma}
\label{lem:prob}The probability that action $j$ is not eliminated in round $m_j$ by $*$ is at most $\frac{2}{T\tDelta_{m_j}^2}.$
\end{lemma}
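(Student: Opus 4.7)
The plan is to follow the standard elimination/UCB argument: if both $\bar{f}_j$ and $\bar{f}_*$ concentrate near their means at round $m_j$, then the lower confidence bound of the optimal action must exceed the upper confidence bound of $j$, so $j$ is eliminated. The complementary bad event then gets controlled by Chernoff--Hoeffding.

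First I would verify that the action selection rule of Algorithm~\ref{alg:ucblp} guarantees $T_j(m_j) \ge n(m_j)$ for every $j \in B_{m_j}$ (including $j=*$). This is immediate in the direct-sampling branch, where each remaining action is played $n(m)-n(m-1)$ extra times. In the LP-randomization branch, $\mathbf{z}^*$ is feasible for $P_2$, so $\sum_{a \in S_i} z_a^* \ge 1$ for every base-arm $i$. Since the \textbf{Reset} phase guarantees $A_{m+1} = \bigcup_{i \in D_{m+1}} S_i$ where $D_{m+1} \supseteq \cK_j$ for $j \in B_{m+1}$, the aggregate play of actions in $A_m$ yields at least $n(m)-n(m-1)$ fresh observations on every base-arm feeding into a surviving action's reward, and hence at least $n(m_j)$ cumulative reward-observations for action $j$.

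Second, since $n(m_j) \ge 2\log(T\tDelta_{m_j}^2)/\tDelta_{m_j}^2$, the two confidence radii
\[
c_j \;=\; \sqrt{\frac{\log(T\tDelta_{m_j}^2)}{2T_j(m_j)}}, \qquad c_* \;=\; \sqrt{\frac{\log(T\tDelta_{m_j}^2)}{2T_*(m_j)}}
\]
are each bounded above by $\tDelta_{m_j}/2$. By the definition of $m_j$ we have $2\tDelta_{m_j} < \Delta_j = \mu^* - \mu_j$, so on the ``good event'' $G := \{\bar{f}_j(m_j) \le \mu_j + c_j\} \cap \{\bar{f}_*(m_j) \ge \mu^* - c_*\}$,
\[
\bar{f}_j(m_j) + c_j \;\le\; \mu_j + 2c_j \;\le\; \mu_j + \tDelta_{m_j} \;<\; \mu^* - \tDelta_{m_j} \;\le\; \bar{f}_*(m_j) - c_*,
\]
which triggers the elimination rule with $a = *$. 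Thus the event that $j$ is not eliminated by $*$ at round $m_j$ is contained in $G^c$.

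Third, I would apply Chernoff--Hoeffding (Lemma~\ref{lem:chernoff}) conditional on $T_j(m_j)$ and $T_*(m_j)$ being deterministic at the scheduled round (as in~\cite{AuerRevised}):
\[
\bP\!\left[\bar{f}_j(m_j) > \mu_j + c_j\right] \;\le\; \exp\!\left(-2T_j(m_j)\,c_j^2\right) \;=\; \exp\!\left(-\log(T\tDelta_{m_j}^2)\right) \;=\; \frac{1}{T\tDelta_{m_j}^2},
\]
and symmetrically for $\bar{f}_*$. A union bound over the two deviation events yields $\bP[G^c] \le 2/(T\tDelta_{m_j}^2)$, which is exactly the claim. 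The main obstacle is the first step: carefully tracking that the LP-randomized branch produces enough \emph{joint} observations of the base-arms in $\cK_j$ (so that $f_j$ can in fact be evaluated $n(m_j)$ times), which is where the $P_2$ feasibility and the \textbf{Reset} construction of $A_{m+1}$ are used essentially; once this bookkeeping is in place the remainder is a routine confidence-bound argument.
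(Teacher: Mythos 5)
Your proposal is correct and follows essentially the same route as the paper's proof: LP~$P_2$ feasibility together with the \textbf{Reset} construction of $A_m$ guarantees $T_j(m)\ge n(m)$ for every surviving action, the good concentration event forces elimination at round $m_j$ (your chain via $2\tDelta_{m_j}<\Delta_j$ is an equivalent rearrangement of the paper's $c<\Delta_j/4$ argument), and Chernoff--Hoeffding with a union bound gives the $2/(T\tDelta_{m_j}^2)$ bound. The only cosmetic difference is that you state the good event with the data-dependent radii $\sqrt{\log(T\tDelta_{m_j}^2)/(2T_j(m_j))}$ while the paper uses the $n(m_j)$-based radii, which dominate them since $T_j(m_j)\ge n(m_j)$; both versions trigger the same elimination rule.
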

\begin{proof}
Let $\bar{f}_{j}(m)$ be the sample mean of all observations for action $j$ available in round $m.$ Let $\bar{f}^{*}(m)$ be the sample mean of the optimal action. The constraints of LP $P_2$ ensure that at the end of each round $m,$ for all  actions in $B_m,$ we have at least $n(m):=\left\lceil \frac{2\log(T\tilde{\Delta}_{m}^{2})}{\tilde{\Delta}_{m}^{2}}\right\rceil$ observations. The reason is as follows. The set $A_{m}$ contains set $B_{m}$. In particular, $A_m=\cup_{i\in D_m}S_i$ and $D_m=\cup_{j\in B_m}\cK_j$. If each action $j$ in $A_m$ is played $z^*_j$ times, then all the base-arms in $D_m$ have at least 1 observations according the constraints of LP $P_2$. Thus, the actions in $B_m$ have at least 1 observations. In sum, for all actions in $B_m$, we have at least $n(m)-n(m-1)$ observations at round m. Thus, we have at least $n(m)$ observations for all actions in $B_m$.

Now, for $m=m_j,$ if we have,
\begin{equation}\label{eq:sub1}\bar{f}_{j}(m)\le \mu_j+\sqrt{\frac{\log(T\tilde{\Delta}_{m}^{2})}{2n(m)}} \ \mbox{ and } \ \bar{f}^*(m)\ge \mu^*-\sqrt{\frac{\log(T\tilde{\Delta}_{m}^{2})}{2n(m)}},\end{equation} 
then, action $j$ is eliminated by $*$ in round $m_j.$ In fact, in round $m_j,$ we have $$\sqrt{\frac{\log(T\tDelta_{m_j}^2)}{2n(m_j)}}\le \frac{\tDelta_{m_j}}{2}<\frac{\Delta_j}{4}.$$ Hence, in the elimination phase of the UCB-LP policy, if~(\ref{eq:sub1}) holds for action $j$ in round $m_j,$ we have,
\begin{align*}
\bar{f}_{j}(m_j)+\sqrt{\frac{\log(T\tilde{\Delta}_{m_j}^{2})}{2n(m_j)}} &\le \mu_{j}+2\sqrt{\frac{\log(T\tilde{\Delta}_{m_j}^{2})}{2n(m_j)}}\\ &< \mu_{j}+\Delta_j-2\sqrt{\frac{\log(T\tilde{\Delta}_{m_j}^{2})}{2n(m_j)}} \\&= \mu^*-2\sqrt{\frac{\log(T\tilde{\Delta}_{m_j}^{2})}{2n(m_j)}}\\& \le \bar{f}^*(m_j)-\sqrt{\frac{\log(T\tilde{\Delta}_{m_j}^{2})}{2n(m_{j})}},
\end{align*}
and action $j$ is eliminated. Hence, the probability that action $j$ is not eliminated in round $m_{j}$ is the probability that either one of the inequalities in~(\ref{eq:sub1}) do not hold. Using Chernoff-Hoeffding bound (Lemma~\ref{lem:chernoff}), we can bound this as follows, 
\begin{align}
\label{eq:cbl}
\bP\left[\bar{f}_{j}(m)> \mu_j+\sqrt{\frac{\log(T\tilde{\Delta}_{m}^{2})}{2n(m)}}\right] &\le \frac{1}{T\tDelta_m^2}\\\label{eq:cbu}
\bP\left[\bar{f}^*(m)< \mu^*-\sqrt{\frac{\log(T\tilde{\Delta}_{m}^{2})}{2n(m)}}\right]&\le \frac{1}{T\tDelta_m^2}.
\end{align}
Summing the above two inequalities for $m=m_{j}$ gives us that the probability that action $j$ is not eliminated in round $m_j$ by $*$ is at most $\frac{2}{T\tDelta_{m_j}^2}.$
\end{proof}
The next proposition is a modified version of Theorem~2 in~\cite{lairobbins}. We use it to obtain the regret lower bound in Proposition~\ref{prop:lbSO}.
\begin{proposition}\label{prop:lailb}
Suppose Assumptions~\ref{eq:klcond1},~\ref{eq:klcond2}, and~\ref{eq:condo} hold. Let $M_{i}(t)$ be the total number of observations for such a base-arm $i,$ for which $\vec{\theta} \in \Theta_{i}.$ Then, under any uniformly good policy $\bm{\phi}$, we have that
$$\liminf_{t\rightarrow\infty}\frac{\mathbb{E}[M_{i}(t)]}{\log(t)}\ge\frac{1}{J_{i}(\theta_{i})}.$$
\end{proposition}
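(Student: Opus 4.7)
The plan is to adapt the classical Lai--Robbins change-of-measure argument to this side-observation setting, the key modification being that the relevant "information counter" for base-arm $i$ is the aggregate observation count $M_i(t) = \sum_{j \in S_i} T_j(t)$ rather than the play count of a single action. Fix a base-arm $i$ with $\vec{\theta} \in \Theta_i$ and a parameter $\theta_i' \in \mathcal{B}_i(\theta_i)$; by Assumption~\ref{eq:condo} and the definition of $J_i(\theta_i)$ as an infimum, for any $\delta > 0$ I can choose $\theta_i'$ so that $D(\theta_i \| \theta_i') \le J_i(\theta_i)(1+\delta)$, while by construction there exists $k \in S_i$ with $\mu_k(\vec{\theta}_i') > \mu^*(\vec{\theta})$. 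Let $\mu'^* := \mu^*(\vec{\theta}_i') \ge \mu_k(\vec{\theta}_i')$.

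The next step is to show that under the alternative $\vec{\theta}_i'$ the counter $M_i(t)$ must be very large. Every action $j \notin S_i$ has reward depending only on base-arms other than $i$, so $\mu_j(\vec{\theta}_i') = \mu_j(\vec{\theta}) \le \mu^*(\vec{\theta}) < \mu_k(\vec{\theta}_i') \le \mu'^*$; hence each such $j$ is suboptimal under $\vec{\theta}_i'$ with gap at least $\mu_k(\vec{\theta}_i') - \mu^*(\vec{\theta}) > 0$. Uniform goodness of $\bm{\phi}$ under $\vec{\theta}_i'$ then yields
\begin{equation*}
\bE_{\vec{\theta}_i'}[\,t - M_i(t)\,] \;=\; \sum_{j \notin S_i} \bE_{\vec{\theta}_i'}[T_j(t)] \;=\; o(t^b) \quad \text{for every } b>0,
\end{equation*}
so $M_i(t) = t - o(t^b)$ in expectation, and in particular $\bP_{\vec{\theta}_i'}[M_i(t) < (1-\delta)\log(t)/D(\theta_i\|\theta_i')] \to 0$.

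Now I transfer this to the original measure $\vec{\theta}$ by a likelihood-ratio argument. Since $\vec{\theta}$ and $\vec{\theta}_i'$ differ only in the $i$-th coordinate, the log-likelihood ratio of the sample path observed by the policy factors through the $M_i(t)$ i.i.d.\ observations of base-arm $i$:
\begin{equation*}
L_i(t) \;=\; \sum_{s=1}^{M_i(t)} \log\frac{g(Y_{i,s};\theta_i)}{g(Y_{i,s};\theta_i')},
\end{equation*}
where $Y_{i,s}$ is the $s$-th observation of base-arm $i$. On the event $A_t := \{M_i(t) \le (1-\delta)\log(t)/D(\theta_i\|\theta_i')\} \cap \{L_i(t) \le (1-\delta/2)\log t\}$ the standard change-of-measure inequality gives $\bP_{\vec{\theta}}[A_t] \ge \bP_{\vec{\theta}_i'}[A_t]\,t^{-(1-\delta/2)}$. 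By the strong law of large numbers applied to $L_i(t)/M_i(t) \to D(\theta_i\|\theta_i')$ under $\vec{\theta}_i'$, together with the previous paragraph, $\bP_{\vec{\theta}_i'}[A_t] \to 1$; mirroring Lai--Robbins, this forces $\bP_{\vec{\theta}}[A_t] \to 0$. Therefore $\bP_{\vec\theta}[M_i(t) \le (1-\delta)\log t/D(\theta_i\|\theta_i')] \to 0$, and Markov's inequality gives $\liminf_t \bE_{\vec\theta}[M_i(t)]/\log(t) \ge (1-\delta)/D(\theta_i\|\theta_i') \ge (1-\delta)/[J_i(\theta_i)(1+\delta)]$. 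Letting $\delta \downarrow 0$ completes the proof.

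The main obstacle is the change-of-measure step: one needs a uniform control of $L_i(t)$ when $M_i(t)$ is a random (and policy-dependent) stopping-like count. Following Lai--Robbins, this is handled by a maximal inequality for the random walk $\sum_{s \le n}\log(g/g')$ (using Assumptions~\ref{eq:klcond1} and~\ref{eq:klcond2} to ensure the increments have finite, positive mean $D(\theta_i\|\theta_i')$ and sufficient regularity), which shows that $\max_{n \le (1-\delta)\log t/D} L_i(t\text{ restricted to }n) \le (1-\delta/2)\log t$ with probability tending to $1$ under $\vec{\theta}_i'$. All remaining pieces are bookkeeping atop the bipartite identity $M_i(t) = \sum_{j\in S_i} T_j(t)$, which is what converts the single-base-arm bound into the LP constraints used in Proposition~\ref{prop:lbSO}.
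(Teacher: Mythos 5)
Your overall route is the same as the paper's: pick $\theta_i'\in\mathcal{B}_i(\theta_i)$ with $D(\theta_i\|\theta_i')$ close to $J_i(\theta_i)$, use uniform goodness under the alternative $\vec{\theta}_i'$ to show $M_i(t)$ is large with $\vec{\theta}_i'$-probability tending to one, transfer to $\vec{\theta}$ by a change of measure through the $M_i(t)$ observations of base-arm $i$, and finish with Markov's inequality. Your second step is in fact slightly cleaner than the paper's: you note that every action $j\notin S_i$ is strictly suboptimal under $\vec{\theta}_i'$ and use the identity $t-M_i(t)=\sum_{j\notin S_i}T_j(t)$, whereas the paper invokes a unique optimal action $k\in S_i$ and the cruder bound $M_i(t)\ge T_k(t)$; your version sidesteps any uniqueness issue.

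However, your change-of-measure paragraph is wrong as written, in three connected ways. First, the inequality is oriented backwards: with $L_i(t)=\sum_{s\le M_i(t)}\log\bigl(g(Y_{i,s};\theta_i)/g(Y_{i,s};\theta_i')\bigr)$, the trajectory likelihood ratio is $dP_{\vec{\theta}}/dP_{\vec{\theta}_i'}=e^{L_i(t)}$, so on $A_t$ one gets $\bP_{\vec{\theta}}[A_t]\le t^{1-\delta/2}\,\bP_{\vec{\theta}_i'}[A_t]$; your stated bound $\bP_{\vec{\theta}}[A_t]\ge t^{-(1-\delta/2)}\bP_{\vec{\theta}_i'}[A_t]$ is a lower bound tending to zero, from which nothing follows. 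Second, you invoke the SLLN under the wrong measure: the increments of $L$ have mean $+D(\theta_i\|\theta_i')$ under $\vec{\theta}$ but mean $-D(\theta_i'\|\theta_i)<0$ under $\vec{\theta}_i'$, and the maximal control $\max_{n\le(1-\delta)\log t/D}L_n\le(1-\delta/2)\log t$ w.h.p.\ must be established under $\vec{\theta}$ (as the paper does), because it is used to bound $\bP_{\vec{\theta}}\bigl[M_i(t)\text{ small},\,L_i(t)>(1-\delta/2)\log t\bigr]$. Third, the claim $\bP_{\vec{\theta}_i'}[A_t]\to1$ contradicts your own previous paragraph: $A_t\subseteq\{M_i(t)\le(1-\delta)\log t/D(\theta_i\|\theta_i')\}$, whose $\vec{\theta}_i'$-probability you had just shown tends to zero --- and the argument in fact needs it to vanish at a polynomial rate $o(t^{b-1})$ with $b<\delta/2$, so that $t^{1-\delta/2}\bP_{\vec{\theta}_i'}[A_t]\to0$; your Markov bound $\bE_{\vec{\theta}_i'}[t-M_i(t)]=o(t^b)$ does deliver this rate, but you only asserted plain convergence to zero, which would be insufficient against the $t^{1-\delta/2}$ factor. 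With these corrections --- the right orientation of the change of measure, the SLLN under $\vec{\theta}$, and carrying the $o(t^{b-1})$ rate through --- your argument becomes precisely the paper's proof.
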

\begin{proof}
By definition of $J_{i}(\theta_{i}),$ for every $\epsilon>0,$ there exists a $\theta'_{i}\in \mathcal{B}_{i}(\theta_{i})$ such that $J_{i}(\theta_{i}) < D(\theta_{i}||\theta'_{i})<(1+\epsilon)J_{i}(\theta_{i}).$ 

Now, under $\vec{\theta}'_{i}=[\theta_{1},\ldots,\theta'_{i},\ldots\theta_{N}],$ there exists an action $k\in \mathcal{S}_{i}$ such that $k$ is the unique optimal action. Then, for any uniformly good policy, for $0<b<\delta,$ $$\mathbb{E}_{\vec{\theta}'_{i}}[t-T_{k}(t)]=o(t^{b})$$ and therefore, $$\mathbb{P}_{\vec{\theta}'_{i}}\left[T_{k}(t)<(1-\delta)\log(t)/D(\theta_{i}||\theta'_{i})\right]=o(t^{b-1}),$$ similar to the asymptotic lower bound proof in~\citet{lairobbins}. 

Let $M_{i}(t)$ be the total number of observations for base-arm $i.$ Then $M_{i}(t)\geq T_{k}(t),$ since choosing any action in $\mathcal{S}_{i}$ gives observations for $i.$ Hence, 
 $$\mathbb{P}_{\vec{\theta}'_{i}}\left[M_{i}(t)<(1-\delta)\log(t)/D(\theta_{i}||\theta'_{i})\right]=o(t^{b-1}),$$ 
Now the rest of the proof of Theorem~2 in~\cite{lairobbins} applies directly to $M_{i}(t).$ We will repeat it below for completeness. Let $(Y_{i}(r))_{r\ge1}$ be the observations drawn from distribution $F_{i}$ and define $$L_{m}=\sum_{r=1}^{m}\log\left(\frac{g(Y_{i}(r);\theta_{i})}{g(Y_{i}(r);\theta'_{i})}\right).$$ Now, we have that $\mathbb{P}_{\vec{\theta'_{i}}}[C_{t}]=o(t^{b-1})$ where $C_{t}=\{M_{i}(t)<(1-\delta)\log(t)/D(\theta_{i}||\theta'_{i}) \mbox{ and } L_{M_{i}(t)}\le (1-b)\log(t)\}.$

Now, we use the change of measure arguments.
\begin{align}
\mathbb{P}_{\vec{\theta'_{i}}}&[M_1(t)=m_1,\ldots,M_N(t)=m_N,L_{m_i}\le (1-b)\log(t)]\\
&=\int_{\{M_1(t)=m_1,\ldots,M_N(t)=m_N,L_{m_i}\le (1-b)\log(t)\}}\Pi_{r=1}^{m_i}\frac{g(Y_{i}(r);\theta'_{i})}{g(Y_{i}(r);\theta_{i})}dP_{\vec{\theta_{i}}}\\
&\geq \exp(-(1-b)\log(t))\mathbb{P}_{\vec{\theta_{i}}}[M_1(t)=m_1,\ldots,M_N(t)=m_N,L_{m_i}\le (1-b)\log(t)]
\end{align}
Since $C_t$ is a disjoint union of events of the form $\{M_1(t)=m_1,\ldots,M_N(t)=m_N,L_{m_i}\le (1-b)\log(t)\}$ with $m_i<(1-\delta)\log(t)/D(\theta_{i}||\theta'_{i})$, it follows that
$$\mathbb{P}_{\vec{\theta}}[C_{t}] \le t^{1-b}\mathbb{P}_{\vec{\theta'_{i}}}[C_{t}]\rightarrow 0.$$

So far, we show that the probability of the event $C_t$ goes to $0$ as $t$ goes to infinity. If we show the event $\{ L_{M_{i}(t)}\le (1-b)\log(t)|M_{i}(t)<(1-\delta)\log(t)/D(\theta_{i}||\theta'_{i})\}$ occurs almost surely, then we show the probability of $\{M_{i}(t)<(1-\delta)\log(t)/D(\theta_{i}||\theta'_{i})\}$ goes to $0$ as $t$ goes to infinity, which is the desired result. By strong law of large numbers $L_{m}/m\rightarrow D(\theta_{i}||\theta'_{i})$ as $m\rightarrow \infty$ and $\max_{r\le m}L_{r}/m\rightarrow D(\theta_{i}||\theta'_{i})$ almost surely. Now, since $1-b > 1-\delta,$ it follows that as $t\rightarrow \infty,$  \begin{align}\mathbb{P}_{\vec{\theta}}&\left[L_{r}>(1-b)\log(t) \mbox{ for some } r <(1-\delta)\log(t)/D(\theta_{i}||\theta'_{i})\right] \rightarrow 0.\end{align}

Hence, we have that as $t\rightarrow \infty,$ $$\mathbb{P}_{\vec{\theta}}\left[M_{i}(t)<(1-\delta)\log(t)/D(\theta_{i}||\theta'_{i})\right]\rightarrow 0.$$
By choosing $\epsilon, \delta$ appropriately, this translates to $$\liminf_{t\rightarrow\infty}\frac{\mathbb{E}[M_{i}(t)]}{\log(t)}\ge\frac{1}{J_{i}(\theta_{i})}.$$
\end{proof}

\begin{proposition}\label{app:unknown}
When the horizon is unknown, start the UCB-LP algorithm with $\tilde{T}_0=2$ and increase $\tilde{T}$ after reaching $\tilde{T}$ steps by setting $\tilde{T}_{l+1}=\tilde{T}_l^2.$ The regret of unknown horizon UCB-LP is bounded by
\begin{equation}
\sum_{j\in \cU\setminus B}\frac{64\Delta_{j} z_{j}^{*}}{\hat{\Delta}_{j}^{2}} \log(T\hat{\Delta}_{j}^{2})+\sum_{j\in B} \frac{64\log(T\Delta_{j}^{2})}{\Delta_{j}}+O(K^2\log_2\log_2 T).
\end{equation}
\end{proposition}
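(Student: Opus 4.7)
The plan is to combine Proposition~\ref{prop:ucblp} with the standard doubling-trick reduction. Partition the run into epochs $l=0,1,2,\ldots$; at the start of epoch $l$, restart UCB-LP from scratch with assumed horizon $\tilde{T}_l$ and let it run for $\tilde{T}_l$ steps (truncated at $T$). Since $\tilde{T}_0=2$ and $\tilde{T}_{l+1}=\tilde{T}_l^2$, we have $\tilde{T}_l=2^{2^l}$, so the index $L$ of the epoch covering time $T$ satisfies $L=O(\log_2\log_2 T)$; moreover, $\tilde{T}_{L-1}<T$ forces $\tilde{T}_L=\tilde{T}_{L-1}^2<T^2$.

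First I would apply Proposition~\ref{prop:ucblp} within each epoch independently. Each restart is a self-contained UCB-LP instance with horizon $\tilde{T}_l$, and its epoch-wise regret is at most
\begin{equation*}
R_l \;\le\; \sum_{j\in\cU\setminus B}\frac{32\Delta_j z_j^*}{\hat{\Delta}_j^2}\log(\tilde{T}_l\hat{\Delta}_j^2) + \sum_{j\in B}\frac{32\log(\tilde{T}_l\Delta_j^2)}{\Delta_j} + O(K^2).
\end{equation*}
Next I would sum over $l=0,\ldots,L$. The $O(K^2)$ constants accumulate to $O(K^2(L+1))=O(K^2\log_2\log_2 T)$, yielding the announced additive slack. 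For the $\log$-terms, $\log\tilde{T}_l=2^l\log 2$ forms a geometric sequence with ratio $2$, so
\begin{equation*}
\sum_{l=0}^L \log\tilde{T}_l \;=\; (2^{L+1}-1)\log 2 \;\le\; 2\log\tilde{T}_L \;\le\; 4\log T,
\end{equation*}
using $\tilde{T}_L<T^2$. Substituting produces the logarithmic regret with coefficient of order $32\cdot 2=64$ on each $\Delta_j z_j^*/\hat{\Delta}_j^2$ (and analogously for the $j\in B$ sum), the tighter factor of two (rather than four) arising because the super-exponential growth of $\tilde{T}_l$ makes only the last-epoch $\log$-term dominant at the $\log T$ rate, while the earlier epochs' contributions are absorbable into the $O(K^2\log_2\log_2 T)$ slack once we rewrite $\log\tilde{T}_l=\log(T\hat{\Delta}_j^2)+\log(\tilde{T}_l/(T\hat{\Delta}_j^2))$ and use Assumption~\ref{assum:bounded} to bound the constant offset.

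The main obstacle is the constant bookkeeping across the geometric sum over epochs: a loose union bound gives coefficient $4\cdot 32 = 128$, and tightening to $64$ requires the observation that the super-exponential schedule $\tilde{T}_{l+1}=\tilde{T}_l^2$ makes the last-epoch contribution dominate in the $\log T$ scaling. Apart from this, the argument reduces cleanly to an epoch-by-epoch invocation of Proposition~\ref{prop:ucblp}, since each restart renders the epochs independent instances of UCB-LP with their own known horizons.
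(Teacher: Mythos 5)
Your architecture is exactly the paper's: run UCB-LP in restarted epochs with $\tilde{T}_l=2^{2^l}$, invoke Proposition~\ref{prop:ucblp} once per epoch (implicitly using, as the paper notes explicitly, that $\bar{m}$, $m_j$, $B$ and $\hat{\Delta}_j$ do not depend on the horizon input, so every epoch's bound involves the same sets and gaps), sum the logarithms geometrically, and let the $L+1$ copies of $O(K^2)$ give the $O(K^2\log_2\log_2 T)$ term. The gap is in your recovery of the constant $64$. Your claim that only the last epoch's $\log$-term is dominant at the $\log T$ rate, while the earlier epochs' contributions can be absorbed into the $O(K^2\log_2\log_2 T)$ slack, is false: $\sum_{l=0}^{L-1}\log\tilde{T}_l=(2^{L}-1)\log 2$, which is $\Theta\bigl(\log\tilde{T}_L\bigr)=\Theta(\log T)$ --- the same order as the final epoch's single term, and in no way absorbable into an $O(\log\log T)$ quantity. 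The factor $2$ multiplying $32$ arises precisely because all epochs together contribute about twice the last one; there is no residual slack in which to hide the earlier epochs.

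Where your bookkeeping actually loses the factor of two is the estimate $\tilde{T}_L<T^2$. The paper instead works with $L\le\log_2\log_2 T$, i.e., $2^L\le\log_2 T$, equivalently $\tilde{T}_L\le T$, so that $\sum_{l=0}^{L}\log\tilde{T}_l\le 2^{L+1}\log 2\le 2\log T$, which is exactly what yields $32\cdot 2=64$; under your convention, where the last epoch's input horizon may exceed $T$, your honest constant is the $128$ you concede, and the stated bound does not follow from your argument. A second, smaller point: your rewrite $\log\tilde{T}_l=\log(T\hat{\Delta}_j^2)+\log\bigl(\tilde{T}_l/(T\hat{\Delta}_j^2)\bigr)$ with an appeal to Assumption~\ref{assum:bounded} is both unnecessary and off-target. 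The paper reassembles the gap-dependent part of the logarithms with no assumption at all, via $(L+1)\log\hat{\Delta}_j^2\le 2\log\hat{\Delta}_j^2$, valid simply because $\hat{\Delta}_j\le 1$ makes $\log\hat{\Delta}_j^2\le 0$ while $L+1\ge 2$; combining this with the $2\log T$ bound gives $2\log(T\hat{\Delta}_j^2)$ and hence the coefficient $64\log(T\hat{\Delta}_j^2)$ exactly (and analogously for $j\in B$). So: right reduction, but replace the invalid last-epoch-dominance step by the $\tilde{T}_L\le T$ accounting and the sign argument on $\log\hat{\Delta}_j^2$.
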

\begin{proof}
When the horizon is unknown, start the UCB-LP algorithm with $\tilde{T}_0=2$ and increase $\tilde{T}$ after reaching $\tilde{T}$ steps by setting $\tilde{T}_{l+1}=\tilde{T}_l^2.$ Thus, $\tilde{T}_l=2^{2^l}$ until reaching horizon T. Also, the period in which horizon is reached is denoted by L. Note that $2\leq L\leq \log_2\log_2 T$.

In any period $l, (0\leq l \leq L)$, UCB-LP uses $\tilde{T}_l$ as input. Note that $\bar{m}, m_j, B$ and $\hat{\Delta}_j$ are independent of $\tilde{T}_l$, thus $l$. Recall that regret of UCB-LP is bounded by (\ref{eq:ucblp}). The regret of UCB-LP with unknown horizon is upper bounded by the summation over all the periods.

\begin{align*}
\sum_{l=0}^L \left[\sum_{j\in \cU\setminus B} \Delta_{j} z_{j}^{*}\frac{32\log(\tilde{T}_l\hat{\Delta}_{j}^{2})}{\hat{\Delta}_{j}^{2}} + \sum_{j\in B} \frac{32\log(\tilde{T}_l\Delta_{j}^{2})}{\Delta_{j}}+O(K^{2})\right] &= (i)+(ii)+(iii).
\end{align*}

First, we consider the term (i). We can plug in the definition of $\tilde{T}_l$ into (i).
\begin{align*}
(i)&=\sum_{l=0}^L \sum_{j\in \cU\setminus B} \Delta_{j} z_{j}^{*}\frac{32\log(\tilde{T}_l\hat{\Delta}_{j}^{2})}{\hat{\Delta}_{j}^{2}}\\
&=\sum_{j\in \cU\setminus B}\frac{32\Delta_{j} z_{j}^{*}}{\hat{\Delta}_{j}^{2}}\sum_{l=0}^L\log(2^{2^l}\hat{\Delta}_{j}^{2})\\
&=\sum_{j\in \cU\setminus B}\frac{32\Delta_{j} z_{j}^{*}}{\hat{\Delta}_{j}^{2}} \left((\log 2)\sum_{l=0}^L{2^l}+(L+1)\log\hat{\Delta}_{j}^{2}\right)\\
&\leq\sum_{j\in \cU\setminus B}\frac{32\Delta_{j} z_{j}^{*}}{\hat{\Delta}_{j}^{2}} \left(2^{L+1}(\log 2)+(L+1)\log\hat{\Delta}_{j}^{2}\right)\\
&\leq\sum_{j\in \cU\setminus B}\frac{32\Delta_{j} z_{j}^{*}}{\hat{\Delta}_{j}^{2}} \left(2\log T+(L+1)\log\hat{\Delta}_{j}^{2}\right) ~~~(since~ L\leq\log_2\log_2 T)\\
&\leq\sum_{j\in \cU\setminus B}\frac{64\Delta_{j} z_{j}^{*}}{\hat{\Delta}_{j}^{2}} \log(T\hat{\Delta}_{j}^{2})~~~(since~(L+1)\log\hat{\Delta}_{j}^{2}\leq2\log\hat{\Delta}_{j}^{2})
\end{align*}

Similarly, we have that $(ii)\leq \sum_{j\in B} \frac{64\log(T\Delta_{j}^{2})}{\Delta_{j}}$. Now, we directly sum up the bound for term (iii).

\begin{equation*}
(iii)\leq\sum_{l=0}^L O(K^2)\leq(L+1)O(K^2)=O(K^2\log_2\log_2 T).
\end{equation*}

Hence, by combining the results above, the regret of unknown horizon is bounded by
\begin{equation*}
\sum_{j\in \cU\setminus B}\frac{64\Delta_{j} z_{j}^{*}}{\hat{\Delta}_{j}^{2}} \log(T\hat{\Delta}_{j}^{2})+\sum_{j\in B} \frac{64\log(T\Delta_{j}^{2})}{\Delta_{j}}+O(K^2\log_2\log_2 T).
\end{equation*}
\end{proof}



\begin{thebibliography}{18}
\providecommand{\natexlab}[1]{#1}
\providecommand{\url}[1]{\texttt{#1}}
\expandafter\ifx\csname urlstyle\endcsname\relax
  \providecommand{\doi}[1]{doi: #1}\else
  \providecommand{\doi}{doi: \begingroup \urlstyle{rm}\Url}\fi

\bibitem[Auer and Ortner(2010)]{AuerRevised}
Peter Auer and Ronald Ortner.
\newblock \textsc{UCB} revisited: Improved regret bounds for the stochastic
  multi-armed bandit problem.
\newblock \emph{Periodica Mathematica Hungarica}, 61\penalty0 (1-2):\penalty0
  55--65, 2010.

\bibitem[Auer et~al.(2002)Auer, Cesa-Bianchi, and Fischer]{AuerFinite}
Peter Auer, Nicol\`{o} Cesa-Bianchi, and Paul Fischer.
\newblock Finite-time analysis of the multiarmed bandit problem.
\newblock \emph{Machine Learning}, 47\penalty0 (2-3):\penalty0 235--256, May
  2002.
\newblock ISSN 0885-6125.

\bibitem[Bubeck and Cesa-Bianchi(2012)]{bubeck}
S{\'e}bastien Bubeck and Nicol{\`o} Cesa-Bianchi.
\newblock Regret analysis of stochastic and nonstochastic multi-armed bandit
  problems.
\newblock \emph{Foundations and Trends in Machine Learning}, 5\penalty0
  (1):\penalty0 1--122, 2012.

\bibitem[Bubeck et~al.(2011)Bubeck, Munos, Stoltz, and Szepesv{\'a}ri]{xarmed}
S{\'e}bastien Bubeck, R{\'e}mi Munos, Gilles Stoltz, and Csaba Szepesv{\'a}ri.
\newblock {\it X}-armed bandits.
\newblock \emph{Journal of Machine Learning Research}, 12:\penalty0 1655--1695,
  2011.

\bibitem[Buccapatnam et~al.(2014)Buccapatnam, Eryilmaz, and
  Shroff]{sigmetrics2014}
Swapna Buccapatnam, Atilla Eryilmaz, and Ness~B. Shroff.
\newblock Stochastic bandits with side observations on networks.
\newblock \emph{SIGMETRICS Perform. Eval. Rev.}, 42\penalty0 (1):\penalty0
  289--300, June 2014.
\newblock ISSN 0163-5999.

\bibitem[Caron et~al.(2012)Caron, Kveton, Lelarge, and Bhagat]{bhagat}
S.~Caron, B.~Kveton, M.~Lelarge, and S.~Bhagat.
\newblock Leveraging side observations in stochastic bandits.
\newblock In \emph{UAI}, pages 142--151. AUAI Press, 2012.

\bibitem[Chen et~al.(2013)Chen, Wang, and Yuan]{chen2013combinatorial}
Wei Chen, Yajun Wang, and Yang Yuan.
\newblock Combinatorial multi-armed bandit: General framework and applications.
\newblock In \emph{Proceedings of the 30th International Conference on Machine
  Learning}, pages 151--159, 2013.

\bibitem[Cohen et~al.(2016)Cohen, Hazan, and Koren]{Cohen2016}
Alon Cohen, Tamir Hazan, and Tomer Koren.
\newblock Online learning with feedback graphs without the graphs.
\newblock \emph{CoRR}, abs/1605.07018, 2016.

\bibitem[Cooper et~al.(2005)Cooper, Klasing, and Zito]{cooper}
Colin Cooper, Ralf Klasing, and Michele Zito.
\newblock Lower bounds and algorithms for dominating sets in web graphs.
\newblock \emph{Internet Mathematics}, 2:\penalty0 275--300, 2005.

\bibitem[Dhillon et~al.(2007)Dhillon, Guan, and Kulis]{Graclus}
Inderjit~S. Dhillon, Yuqiang Guan, and Brian Kulis.
\newblock Weighted graph cuts without eigenvectors: A multilevel approach.
\newblock \emph{IEEE Transactions on Pattern Analysis and Machine
  Intelligence}, 29\penalty0 (11):\penalty0 1944--1957, 2007.

\bibitem[Jamali and Ester(2010)]{jamali}
M.~Jamali and M.~Ester.
\newblock A matrix factorization technique with trust propagation for
  recommendation in social networks.
\newblock In \emph{Proceedings of the fourth ACM conference on Recommender
  systems}, RecSys '10, pages 135--142. ACM, 2010.

\bibitem[Lai and Robbins(1985)]{lairobbins}
T.~L. Lai and Herbert Robbins.
\newblock Asymptotically efficient adaptive allocation rules.
\newblock \emph{Advances in Applied Mathematics}, 6\penalty0 (1):\penalty0
  4--22, 1985.

\bibitem[Li et~al.(2010)Li, Chu, Langford, and Schapire]{linUCB}
Lihong Li, Wei Chu, John Langford, and Robert~E. Schapire.
\newblock A contextual-bandit approach to personalized news article
  recommendation.
\newblock In \emph{Proceedings of the 19th International Conference on World
  Wide Web}, WWW '10, pages 661--670. ACM, 2010.

\bibitem[Mannor and Shamir(2011)]{mannor}
Shie Mannor and Ohad Shamir.
\newblock From bandits to experts: On the value of side-observations.
\newblock In \emph{NIPS}, pages 684--692, 2011.

\bibitem[Pandey et~al.(2007)Pandey, Chakrabarti, and Agarwal]{depend}
Sandeep Pandey, Deepayan Chakrabarti, and Deepak Agarwal.
\newblock Multi-armed bandit problems with dependent arms.
\newblock In \emph{Proceedings of the 24th International Conference on Machine
  Learning}, ICML '07, pages 721--728, New York, NY, USA, 2007. ACM.

\bibitem[Rusmevichientong and Tsitsiklis(2010)]{linear}
Paat Rusmevichientong and John~N. Tsitsiklis.
\newblock Linearly parameterized bandits.
\newblock \emph{Math. Oper. Res.}, 35\penalty0 (2):\penalty0 395--411, 2010.

\bibitem[Tossou et~al.(2017)Tossou, Dimitrakakis, and Dubhashi]{aristide}
Aristide C.~Y. Tossou, Christos Dimitrakakis, and Devdatt Dubhashi.
\newblock Thompson sampling for stochastic bandits with graph feedback.
\newblock \emph{CoRR}, abs/ 1701.04238, 2017.

\bibitem[Ugander et~al.(2011)Ugander, Karrer, Backstrom, and Marlow]{facebook}
Johan Ugander, Brian Karrer, Lars Backstrom, and Cameron Marlow.
\newblock The anatomy of the facebook social graph.
\newblock \emph{CoRR}, abs/1111.4503, 2011.

\end{thebibliography}

\end{document}